\tikzset{fontscale/.style = {font=\relsize{11pt}}}
\newtheorem{thm}{Theorem}[section]
\newtheorem{lem}[thm]{Lemma}
\newtheorem{prop}[thm]{Proposition}
\newcommand{\R}{\mathbb{R}}
\newcommand{\hp}{\operatorname{h_p}}
\newcommand{\M}{\operatorname{M}}
\renewcommand{\M}{\mathbf{M}}
\newcommand{\dN}[1]{d_N^{#1}}
\begin{document}


\title{A new class of metrics for learning on real-valued and structured data}







\author{Ruiyu Yang}
\author{Yuxiang Jiang}
\author{Scott Mathews}
\author{Elizabeth A. Housworth}
\author{Matthew W. Hahn}
\author{Predrag Radivojac}
\address{Indiana University, Bloomington, Indiana, U.S.A.}
\email{\{ry2, yuxjiang, scomathe, ehouswor, mwh, predrag\}@indiana.edu}

\keywords{Distance, metric, ontology, machine learning, text mining, high-dimensional data, computational biology}


\maketitle

\begin{abstract}
We propose a new class of metrics on sets, vectors, and functions that can be used in various stages of data mining, including exploratory data analysis, learning, and result interpretation. These new distance functions unify and generalize some of the popular metrics, such as the Jaccard and bag distances on sets, Manhattan distance on vector spaces, and Marczewski-Steinhaus distance on integrable functions. We prove that the new metrics are complete and show useful relationships with $f$-divergences for probability distributions. To further extend our approach to structured objects such as concept hierarchies and ontologies, we introduce information-theoretic metrics on directed acyclic graphs drawn according to a fixed probability distribution. We conduct empirical investigation to demonstrate intuitive interpretation of the new metrics and their effectiveness on real-valued, high-dimensional, and structured data. Extensive comparative evaluation demonstrates that the new metrics outperformed multiple similarity and dissimilarity functions traditionally used in data mining, including the Minkowski family, the fractional $L^p$ family, two $f$-divergences, cosine distance, and two correlation coefficients. Finally, we argue that the new class of metrics is particularly appropriate for rapid processing of high-dimensional and structured data in distance-based learning.
\end{abstract}

\section{Introduction}
The development of domain-specific learning algorithms inevitably requires choices regarding data preprocessing, data representation, training, model selection, or evaluation. One such requirement permeating all of data mining is the selection of similarity or distance functions. In a supervised setting, for example, the nearest neighbor classifiers \cite{Cover1967} and kernel machines \cite{Shawe2004} critically depend on the selection of distance functions. Similarly, the entire classes of clustering techniques rely on the distances that are sensible for a particular domain \cite{Tan2006}. Modern applications further require that distance functions be fast to compute, easy to interpret, and effective on high-dimensional data.

We distinguish between distances and distance metrics; i.e., functions that impose constraints on the general notion of distance \cite{book}. Although restrictions to metrics are not required in data mining \cite{Margareta}, a number of algorithms rely on the existence of metric spaces either explicitly or implicitly. Metric-associated benefits include well-defined point neighborhoods, advanced indexing through metric trees, provable convergence, guarantees for embedding, and intuitive result interpretation. Satisfying metric properties is therefore desirable and generally leads to computational speed-ups and better inference outcomes \cite{Moore2000, Elkan, DBScan, Hamerly2010, Simovici}.

In this work we present a new class of metrics on sets, vectors, and functions that satisfy all of the aforementioned properties. We identify well-known special cases and then show how these distance functions can be adapted to give rise to information-theoretic metric spaces on sets of directed acyclic graphs that are used as class labels in high-cardinality structured-output learning. We prove useful properties of the new metrics and then carry out experiments to assess their suitability in real-life applications. The new metrics exhibited good intuitive behavior and performed favorably against all similarity and dissimilarity functions evaluated in this work, including the Euclidean distance, cosine distance, Pearson's correlation coefficient, Spearman's rank correlation coefficient, and others.

The remainder of this paper is organized as follows. In Section \ref{sec:motivation} we give a motivating example for this work and state the contributions. In Sections \ref{methods}-\ref{connection} we present new metrics and prove useful theoretical properties. In Section \ref{experiments} we carry out empirical evaluation on several types of data. In Section \ref{ontology} we introduce metrics on ontologies and evaluate their performance on problems in computational biology. Sections \ref{experiments}-\ref{ontology} also give performance insights and discuss computational complexity. In Section \ref{relatedwork} we summarize the related work. Finally, in Section \ref{conclusion} we draw conclusions considering both theoretical and empirical findings of our study.

\section{Motivation and contributions}
\label{sec:motivation}

\subsection{A motivating example.} The selection of distance functions and understanding of their behavior is fundamental to data mining. Inference algorithms such as k-nearest neighbor (KNN) classification and K-means clustering rely directly on user-provided distance functions and are among the most popular techniques in the field \cite{Wu2009}. Although both algorithms permit the use of any distance measure, they are generally used with the Euclidean distance.\footnote{K-means algorithm aims to group the data so as to minimize the sum-of-squared errors objective; i.e., the sum of squared Euclidean distances between data points and their respective centroids \cite{Tan2006}.} There is ample evidence, however, that Euclidean distance performs poorly in high-dimensional spaces, leading to a body of theoretical and practical work towards understanding its properties \cite{beyer1999nearest, hinneburg2000nearest, Aggarwal2001b, radovanovic2010hubs} and raising questions about best practices in the field. Other distances; e.g., fractional distances or cosine typically improve the performance in practice. However, these distances are not metrics (e.g., they violate triangle inequality), and so applications using them relinquish theoretical guarantees reserved only for distance metrics. For example, well-known accelerations for K-means clustering only apply to metric spaces \cite{Elkan, Hamerly2010}. 

In an ideal application, one would select a distance function with good theoretical \emph{and} practical characteristics. Surprisingly, however, we are not aware of any distance metric that performs well in high-dimensional spaces and, conversely, no distance that performs well in high-dimensional spaces belongs to the category of metrics. One is therefore left with a balancing act between performance accuracy and theoretical guarantees, a choice that ultimately hinges on a practitioner's intuition and experience. This work aims to address this situation by proposing a class of distance metrics that, among other benefits, also perform well in high-dimensional spaces. 

\subsection{Contributions.} As discussed above, the motivation for this work is to addresses important needs of a typical data mining pipeline through theoretical and practical contributions. In particular,
\begin{quote} 
($i$) We introduce a new class of distance metrics across different data types, including sets, vector spaces, integrable functions, and ontologies.

\vspace{2mm}
\noindent ($ii$) We identify several important special cases of these metrics, also across different data types. This unexpected unification provides new insights and connections between data mining applications.

\vspace{2mm}
\noindent ($iii$) We analyze theoretical properties of the new metrics and show connections with the Minkowski family and \emph{f}-divergences. This analysis gives inequalities that can be used to provide guarantees in further theoretical studies (e.g., lower risk bounds). 

\vspace{2mm}
\noindent ($iv$) We empirically evaluate the performance of the new metrics against many other distance functions. While our metrics fare well on all types of data, the major distinction is shown on high-dimensional data in text mining applications. 

\vspace{2mm}
\noindent ($v$) We extend the class of distance metrics to ontologies (directed acyclic graphs) drawn from a fixed probability distribution. We demonstrate that these metrics have natural information-theoretic interpretation and can be used for evaluation of classification models in structured-output learning; in particular, when the output of a classifier is a subgraph of a large directed acyclic graph.

\vspace{2mm}
\noindent ($vi$) We evaluate distance metrics on ontologies in two bioinformatics case studies. The first application demonstrates the intuitive nature of new distances by comparing protein sequence similarity against similarity of their molecular and biological functions. The second application clusters several species based solely on biological functions of their proteins, defined via Gene Ontology \cite{Ashburner2000} annotations, and shows that such clustering can recover the evolutionary species tree obtained from protein sequences.
\end{quote} 

\section{Theoretical Framework}
\label{methods}




\subsection{Background}
Metrics are a mathematical formalization of the everyday notion of distance \cite{Goldfarb1992}. Given a non-empty set $X$, a function $d:X\times X\rightarrow\mathbb{R}$ is called a \emph{metric} if
\begin{enumerate}
\item $d(a,b) \geq 0$ (nonnegativity)
\item $d(a,a) = 0$ (reflexivity)
\item $d(a,b) = 0 \; \Leftrightarrow \; a=b$ (identity of indiscernibles)
\item $d(a,b) = d(b,a)$ (symmetry)
\item $d(a,c) \leq d(a,b) + d(b,c)$ (triangle inequality)
\end{enumerate}
\noindent for all $a,b\in X$. A non-empty set $X$ endowed with a metric $d$ is called a \emph{metric space} \cite{book}. 

Although these conditions do not provide the minimum set that defines a metric (e.g., 1 follows from 4 and 5), they are stated to explicitly point out important properties of distance functions and enable us to distinguish between various types of distances. For example, there exists a historical distinction between the general notion of distance (conditions 1, 2, and 4) and that of a  metric \cite{book}, though there are inconsistencies in the more recent literature. Examples of distances that do not satisfy metrics requirements include cosine distance, fractional $L^p$ distances, one minus a Pearson's correlation coefficient, etc. Furthermore, functions such as some $f$-divergences may not even satisfy the symmetry requirement and are generally referred to as dissimilarities or divergences.


In Sections \ref{metrics_on_sets}-\ref{metrics_on_functions} we will introduce new metrics on sets, vectors, and integrable functions. Each metric will have a real-valued parameter $p \geq 1$, with the possibility that $p = \infty$. All proofs can be found in the Appendix.

\subsection{Metrics on sets}
\label{metrics_on_sets}
We start with the simplest case and define two new metrics on finite sets. Both will be extended to more complex situations in subsequent sections.

\subsubsection{Unnormalized metrics on sets}
Let $X$ be a non-empty set of finite sets drawn from some universe. We define a function $\d{p}:X\times X\rightarrow\mathbb{R}$ as
\begin{equation}
\d{p}(A,B) = (|A-B|^p+|B-A|^p)^\frac{1}{p},\label{eq:1}
\end{equation}
\noindent where $|\cdot|$ denotes set cardinality, $A-B = A \cap B^c$, and $p \geq 1$ is a parameter mentioned earlier.
\begin{thm}
$(X, d^p)$ is a metric space.
\label{ThmUnnormalizedSet}
\end{thm}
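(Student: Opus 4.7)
The plan is to verify the four axioms that were not already used to define $d$ (non-negativity, reflexivity, and symmetry follow immediately from the definition since cardinalities are non-negative and the expression is symmetric in $A,B$), and then prove identity of indiscernibles and the triangle inequality. Identity of indiscernibles is short: $d(A,B)=0$ forces $|A-B|^p+|B-A|^p=0$, so both summands vanish, giving $A\subseteq B$ and $B\subseteq A$.

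The substantive step is the triangle inequality. My strategy is to reduce it to a one-dimensional combinatorial inequality on cardinalities plus a standard Minkowski-type bound in $\R^{2}$. The combinatorial piece is the set-theoretic inclusion
\begin{equation*}
A-C \;\subseteq\; (A-B)\cup (B-C),
\end{equation*}
proved by a one-line element chase: if $x\in A-C$ and $x\in B$ then $x\in B-C$, otherwise $x\in A-B$. Taking cardinalities gives $|A-C|\le |A-B|+|B-C|$, and swapping the roles of $A$ and $C$ gives $|C-A|\le |C-B|+|B-A|$.

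For the second piece, set $a_1=|A-B|$, $b_1=|B-A|$, $a_2=|B-C|$, $b_2=|C-B|$. By monotonicity of $t\mapsto t^{p}$ on $[0,\infty)$,
\begin{equation*}
d(A,C) \;=\; \bigl(|A-C|^{p}+|C-A|^{p}\bigr)^{1/p} \;\le\; \bigl((a_1+a_2)^{p}+(b_1+b_2)^{p}\bigr)^{1/p}.
\end{equation*}
Applying Minkowski's inequality in $\R^{2}$ with the $\ell^{p}$-norm (which is where the hypothesis $p\ge 1$ is used) to the vectors $(a_1,b_1)$ and $(a_2,b_2)$ bounds the right-hand side by $(a_1^{p}+b_1^{p})^{1/p}+(a_2^{p}+b_2^{p})^{1/p}=d(A,B)+d(B,C)$, completing the proof.

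I expect the main obstacle to be simply recognizing the correct decoupling: the triangle inequality is not just a summation of two scalar triangle inequalities, because the $p$-th root mixes the two coordinates. Without the set-theoretic inclusion above, one would be tempted to try subadditivity of $t\mapsto t^{1/p}$, which fails for $p>1$. Once the inclusion is in hand, Minkowski's inequality handles the rest in a single line, and the case $p=1$ drops out as the familiar symmetric-difference metric.
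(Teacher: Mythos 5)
Your proof is correct and follows essentially the same route as the paper's: both arguments combine the Minkowski inequality in $\R^2$ with the cardinality bounds $|A-C|\le|A-B|+|B-C|$ and $|C-A|\le|C-B|+|B-A|$ (which the paper justifies by pointing to its Venn diagram and you justify by an explicit element chase). Your write-up is somewhat more complete, since it spells out the set-theoretic inclusion and the remaining metric axioms that the paper dismisses as obvious.
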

The symmetric distance on sets is a special case of $\d{p}$ when $p=1$ and it converges to the bag distance as $p \rightarrow \infty$ \cite{book}. 


\subsubsection{Normalized metrics on sets}
Let $X$ again be a non-empty set of finite sets drawn from some universe. We define a function $\dN{p}:X\times X\rightarrow\mathbb{R}$ as
\begin{equation}
\dN{p}(A,B) = 
\dfrac{(|A-B|^p+|B-A|^p)^\frac{1}{p}}{|A\cup B|}, \label{eq:2}
\end{equation}
\noindent if $|A\cup B| \neq 0$ and zero otherwise. 

\begin{thm}
\label{ThmSet}
$(X, \dN{p})$ is a metric space. In addition, $\dN{p}:X \times X \rightarrow[0,1]$.
\end{thm}

Observe that the Jaccard distance is a special case of $\dN{p}$ when $p=1$. 
%
%



\subsubsection{Relationship to Minkowski distance} 
Although the new metrics have a similar form to the Minkowski distance on binary set representations, they are generally different. Take for example $A=\left\{ 1,2,4\right\} $ and $B=\left\{ 2,3,4,5\right\}$ from a universe of $k=5$ elements. A sparse set representation results in the following encoding: $\mathbf{a}=\left(1,1,0,1,0\right)$ and $\mathbf{b}=\left(0,1,1,1,1\right)$. The Minkowski ($L^p$) distance of order $p$ between $\mathbf{a}$ and $\mathbf{b}$ is defined as
\[
d^p_{\textrm{M}}(\mathbf{a},\mathbf{b})=\left({\textstyle \sum_{i=1}^{k}}\left|a_i-b_i\right|^{p}\right)^{\nicefrac{1}{p}} = \left\Vert \mathbf{a}-\mathbf{b}\right\Vert _{p},
\]
\noindent and $p\geq1$. Substituting the numbers into the expressions above gives $d_{\textrm{M}}^{1}(\mathbf{a},\mathbf{b})=3$ and $\d{1}(A,B)=3$; $d^{2}_{\textrm{M}}(\mathbf{a},\mathbf{b})=\sqrt{3}$ and $\d{2}(A,B)=\sqrt{5}$, etc. In fact, $d_{\textrm{M}}^{p}(\mathbf{a},\mathbf{b})\neq \d{p}(A,B)$ for all $p > 1$.


\subsection{Metrics on vector spaces}
\label{metrics_on_vectors}
We define a version of our metrics on the vector space $\R^k$, where $k \in \mathbb{N}$ is the dimension of the space. Let $\mathbf{x}=(x_1,x_2, \ldots,x_k)$ and $\mathbf{y}=(y_1,y_2, \ldots,y_k)$ be any two points in $\R^k$.

\subsubsection{Unnormalized metrics on vectors}
We define a function $\d{p}: \R^k \times \R^k \rightarrow \mathbb{R}$ as 
\begin{equation}
\d{p}(\mathbf{x},\mathbf{y}) = \Big(\big(\sum_{i:x_i \geq y_i}x_{i}-y_{i}\big)^{p}+\big(\sum_{i:x_i<y_i}y_{i}-x_{i}\big)^{p}\Big)^{\frac{1}{p}}.
\label{eq:7}
\end{equation}
\begin{thm}
$(\R^k, \d{p})$ is a metric space.
\label{ThmVectorU}
\end{thm}
When $p=1$ the distance from Eq.~\ref{eq:7} is equivalent to the Manhattan (cityblock) distance.

\subsubsection{Normalized metrics on vectors}
We define a function $\dN{p}: \R^k \times \R^k \rightarrow \mathbb{R}$ as 
\begin{equation}
\dN{p}(\mathbf{x},\mathbf{y})=\frac{\d{p}(\mathbf{x},\mathbf{y})}{\sum_{i=1}^{k}\max(|x_{i}|,|y_{i}|,|x_{i}-y_{i}|)}.\label{eq:8}
\end{equation}
\begin{thm}
$(\R^k,\dN{p})$ is a metric space. In addition, $\dN{p}:X \times X \rightarrow[0,1]$. 
\label{ThmVectorN}
\end{thm}
As mentioned above, the new metrics $\d{p}$ and the Minkowski distance $d_{\textrm{M}}^p$ on $\R^k$ are different for $p>1$. However, we were able to establish a \emph{strong equivalence} between the two in Section \ref{connection}. Therefore, many useful properties of the class of Minkowski distances also hold for the metrics $\d{p}$. For instance, the completeness of $(\R^k,d_{\textrm{M}}^p)$ implies that $(\R^k,\d{p})$ is also complete.

We alert the reader that we used the same symbol $\d{p}$ in Eqs.~\ref{eq:1} and \ref{eq:7} and $\dN{p}$ in Eqs.~\ref{eq:2} and \ref{eq:8}, but believe it should not present interpretation problems. For example, Eq.~\ref{eq:8} is not the Jaccard distance when $p=1$, but rather its analog in real-valued vector spaces, as defined in this work. We shall continue this notation pattern in the next section.

\subsection{Metrics on integrable functions}
\label{metrics_on_functions}
We now extend the previously introduced metrics to integrable functions and show that the space of the integrable functions equipped with the new metrics is complete. 

\subsubsection{Unnormalized metrics on functions}
Let $L(\R)$ be a set of integrable functions on $\mathbb{R}$. We define $\d{p}: L(\R) \times L(\R) \rightarrow \mathbb{R}$ as
\begin{equation}
    \d{p}(f,g) = \left((\int(f-g)^{+}\,dx)^p+(\int(f-g)^{-}\,dx)^p\right)^\frac{1}{p},
\label{eq:5}
\end{equation}
\noindent where $f^+ = \max(f,0)$, $f^- = \max(-f,0)$.

\begin{thm}
$(L(\R),\d{p})$ is a metric space.
\label{Thm1}
\end{thm}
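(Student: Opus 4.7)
My plan is to mirror the proof of Theorem \ref{ThmUnnormalizedSet}, with $\int(f-g)^+\,dx$ and $\int(f-g)^-\,dx$ playing the roles of $|A-B|$ and $|B-A|$. The non-negativity and reflexivity of $D$ are immediate. Symmetry follows because $(f-g)^+ = (g-f)^-$ and $(f-g)^- = (g-f)^+$, so the two summands just swap. For identity of indiscernibles, $D(f,g)=0$ forces both $\int(f-g)^+\,dx = 0$ and $\int(f-g)^-\,dx = 0$, hence $(f-g)^+ = (f-g)^- = 0$ almost everywhere, so $f = g$ a.e.; as usual for $L^1$-type spaces, one interprets $L(\R)$ modulo a.e.\ equivalence so that this becomes genuine equality.

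The only serious step is the triangle inequality. The key pointwise fact is that for any $f,g,h$ and any $x$,
\begin{align*}
(f-h)^+ &= \max\bigl((f-g)+(g-h),0\bigr) \leq (f-g)^+ + (g-h)^+,\\
(f-h)^- &= \max\bigl(-(f-g)-(g-h),0\bigr) \leq (f-g)^- + (g-h)^-,
\end{align*}
using that $\max(u+v,0) \leq \max(u,0)+\max(v,0)$ for real $u,v$. Integrating preserves these inequalities, so with the abbreviations $u_1 = \int(f-g)^+\,dx$, $u_2 = \int(f-g)^-\,dx$, $v_1 = \int(g-h)^+\,dx$, $v_2 = \int(g-h)^-\,dx$, we obtain $\int(f-h)^+\,dx \leq u_1+v_1$ and $\int(f-h)^-\,dx \leq u_2+v_2$. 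Since $t \mapsto t^{1/p}$ is monotone on $[0,\infty)$,
\begin{align*}
D(f,h) &\leq \bigl((u_1+v_1)^p + (u_2+v_2)^p\bigr)^{1/p}\\
       &\leq (u_1^p+u_2^p)^{1/p} + (v_1^p+v_2^p)^{1/p} = D(f,g)+D(g,h),
\end{align*}
where the second inequality is Minkowski's inequality applied to the two-dimensional vectors $(u_1,u_2)$ and $(v_1,v_2)$ in $\ell^p$. This is exactly the same final move as in Theorem \ref{ThmUnnormalizedSet}.

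I do not expect any substantive obstacle, since the argument is a direct transcription of the set-theoretic proof once one replaces cardinalities of symmetric differences with integrals of positive/negative parts. The one place a reader might stumble is verifying the pointwise inequality on $(f-h)^\pm$, but it is a one-line consequence of the subadditivity of $\max(\cdot,0)$ on sums. The a.e.\ caveat in the identity-of-indiscernibles step is the only genuine departure from the discrete setting and deserves a brief remark.
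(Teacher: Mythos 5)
Your proof is correct and is essentially the paper's argument: both rest on the pointwise subadditivity of $(\cdot)^+$ and $(\cdot)^-$ together with Minkowski's inequality for two-dimensional $\ell^p$ vectors, merely applied in the opposite order (you bound $D(f,h)$ from above, the paper chains inequalities downward from $D(f,g)+D(g,h)$). Your explicit remark about working modulo a.e.\ equivalence for the identity of indiscernibles is a point the paper only gestures at.
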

The well-known $L^1$ distance is a special case of $\d{p}$ when $p=1$.

\subsubsection{Normalized metrics on functions}
Let $L(\R)$ again be a set of bounded integrable functions on $\mathbb{R}$ and $\d{p}$ the distance function from Eq.~\ref{eq:5}. We define $\dN{p}: L(\R) \times L(\R) \rightarrow \mathbb{R}$ as

\begin{equation}
    \dN{p}(f,g) = \dfrac{\d{p}(f,g)}{\int \max(|f|,|g|,|f-g|)\,dx}.
 \label{eq:6}
\end{equation}
%

\begin{thm}
$(L(\R),\dN{p})$ is a metric space. In addition, $\dN{p}:L(\mathbb{R})\times L(\mathbb{R})\rightarrow[0,1]$. 
\label{ThmR}
\end{thm}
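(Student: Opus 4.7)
The plan is to mirror the proof of Theorem \ref{ThmSet}, substituting integrals for cardinalities and invoking the already-established Theorem \ref{Thm1}. Non-negativity and symmetry are immediate. The identity of indiscernibles reduces to the corresponding property of $D$ when the denominator is positive, and is trivial otherwise (using the convention $D_N(f,g)=0$ when $\int\max(|f|,|g|,|f-g|)\,dx=0$; the degenerate cases $f\equiv g\equiv 0$ or $g\equiv h\equiv 0$ a.e.\ trivialize the triangle inequality). The upper bound $D_N\leq 1$ follows from Minkowski (noting $\hp$ is dominated by the sum of its coordinates for $p\geq 1$), which gives $D(f,g)\leq \int |f-g|\,dx \leq \int\max(|f|,|g|,|f-g|)\,dx$.

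For the main claim (triangle inequality in the non-degenerate case), I write $P_i,N_i$ for the integrals of the positive and negative parts of the three differences $f-g$, $g-h$, $f-h$, write $M_1,M_2,M_3$ for the three denominators in the definition of $D_N$, and define the common ``union''
\[
\tau(x)=\max\{|f(x)|,|g(x)|,|h(x)|,|f(x)-g(x)|,|g(x)-h(x)|,|f(x)-h(x)|\},\qquad T=\int\tau(x)\,dx.
\]
Since each of the denominators $M_1,M_2,M_3$ is an integral of a pointwise maximum no larger than $\tau$, we have $M_1,M_2,M_3\leq T$. Replacing $M_1,M_2$ by $T$ and applying Minkowski to $(P_1,N_1),(P_2,N_2)$ then yields
\[
D_N(f,g)+D_N(g,h)\;\geq\;\frac{\hp(P_1,N_1)+\hp(P_2,N_2)}{T}\;\geq\;\frac{\hp(P_1+P_2,N_1+N_2)}{T}.
\]

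The crux is the pointwise inequality
\[
(f-g)^+(x)+(g-h)^+(x)+\max\{|f(x)|,|h(x)|,|f(x)-h(x)|\}\;\geq\;(f-h)^+(x)+\tau(x),
\]
together with its mirror for negative parts. Integrating gives $P_1+P_2+M_3\geq P_3+T$ and $N_1+N_2+M_3\geq N_3+T$; the simpler bound $\tau(x)\geq (f-g)^+(x)+(g-h)^+(x)$ (verified in each ordering of $f(x),g(x),h(x)$) yields $P_1+P_2\leq T$ and similarly $N_1+N_2\leq T$. These ingredients support the two-step manipulation from Theorem \ref{ThmSet}: first subtract $T-M_3\geq 0$ from each of the two numerators and from the denominator $T$ (both fractions are in $[0,1]$, so this shrinks them and hence shrinks $\hp$), then use monotonicity of $\hp$ to discard the remaining nonnegative slack, giving
\[
\frac{\hp(P_1+P_2,N_1+N_2)}{T}\;\geq\;\frac{\hp\bigl(P_1+P_2-(T-M_3),\,N_1+N_2-(T-M_3)\bigr)}{M_3}\;\geq\;\frac{\hp(P_3,N_3)}{M_3}=D_N(f,h).
\]

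The main obstacle is the pointwise inequality above. Whereas Theorem \ref{ThmSet} benefits from $0/1$-valued indicators and the seven disjoint Venn regions, here $f(x),g(x),h(x)$ range over all of $\R$ and no clean decomposition is available. I would handle it by case analysis on the six orderings of $f(x),g(x),h(x)$, together with the signs that determine which absolute values dominate in $\tau$ and in $\max\{|f|,|h|,|f-h|\}$. In every case, the excess $\tau(x)-\max\{|f(x)|,|h(x)|,|f(x)-h(x)|\}$ is supplied by one of $|g(x)|,|f(x)-g(x)|,|g(x)-h(x)|$, and a short computation shows it is bounded by $(f-g)^+(x)+(g-h)^+(x)-(f-h)^+(x)$. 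Once this pointwise inequality is established, the rest of the proof is a verbatim translation of the argument in Theorem \ref{ThmSet}.
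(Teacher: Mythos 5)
Your proposal is correct and follows essentially the same route as the paper: your $T$ is the paper's $\M^*(f,g,h)$, your subtraction of $T-M_3$ is the paper's subtraction of $\Gamma(f,g,h)$ (the identity $T-\Gamma=M_3$ being the paper's Lemma \ref{Trivial}), and your crux pointwise inequality is exactly the paper's Lemma \ref{TheInequality}, which the paper establishes by the same kind of case analysis you sketch (organized via the identity $u^+ + v^+ = \min(|u|,|v|)\boldsymbol{1}_{\{uv<0\}} + (u+v)^+$).
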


Observe that the Marczewski-Steinhaus distance~\cite{marczewski1958certain} is a special case of $\dN{p}$ when $p=1$.

\begin{thm}
$(L(\R),\d{p})$ and $(L(\R),\dN{p})$ are complete metric spaces.
\label{completeness}
\end{thm}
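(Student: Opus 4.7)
The strategy is to reduce both completeness claims to the classical completeness of $L^{1}(\mathbb{R})$ via a bi-Lipschitz comparison with the $L^{1}$ metric.

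For $(L(\mathbb{R}),D)$, the first step is to notice that $\int(f-g)^{+}\,dx+\int(f-g)^{-}\,dx=\|f-g\|_{L^{1}}$ and to apply the standard equivalence of the $\ell^{p}$ and $\ell^{1}$ norms on $\mathbb{R}^{2}$, namely $(a+b)/2^{1-1/p}\le(a^{p}+b^{p})^{1/p}\le a+b$, which yields
\[
2^{1/p-1}\,\|f-g\|_{L^{1}}\;\le\;D(f,g)\;\le\;\|f-g\|_{L^{1}}.
\]
Thus $D$ and $\|\cdot\|_{L^{1}}$ are bi-Lipschitz equivalent, so $D$-Cauchy sequences are exactly $L^{1}$-Cauchy sequences with identical limits, and completeness transfers from the Riesz--Fischer theorem.

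For $(L(\mathbb{R}),D_{N})$, take a $D_{N}$-Cauchy sequence $\{f_{n}\}$ and set $a_{n}=\|f_{n}\|_{L^{1}}$. Combining $\mathbf{M}(f,g)\le\|f\|_{L^{1}}+\|g\|_{L^{1}}$ with the reverse triangle lower bound $D(f,g)\ge 2^{1/p-1}\bigl|\,\|f\|_{L^{1}}-\|g\|_{L^{1}}\,\bigr|$ would give
\[
D_{N}(f_{n},f_{m})\;\ge\;2^{1/p-1}\cdot\frac{|a_{n}-a_{m}|}{a_{n}+a_{m}}
\]
whenever $a_{n}+a_{m}>0$. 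The Cauchy condition then forces $a_{n}/a_{m}\to 1$, and a short case analysis shows that either $f_{n}=0$ eventually (trivial case, limit $0$) or $a_{n}>0$ eventually with $a_{n}\to\ell$ for some $\ell>0$; the mixed case is ruled out because $D_{N}(0,f_{m})\ge 2^{1/p-1}$ whenever $f_{m}\ne 0$, contradicting the Cauchy condition.

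In the non-trivial case $\ell>0$, the denominator $\mathbf{M}(f_{n},f_{m})$ is bounded above and bounded away from zero for large $n,m$, so $D(f_{n},f_{m})=D_{N}(f_{n},f_{m})\cdot\mathbf{M}(f_{n},f_{m})\to 0$. By the first part, $\{f_{n}\}$ is $L^{1}$-Cauchy with limit $f\in L(\mathbb{R})$ satisfying $\|f\|_{L^{1}}=\ell$. The final step is to verify $D_{N}(f_{n},f)\to 0$, which reduces to $L^{1}$-continuity of $\mathbf{M}$ and $D$; the former follows from the pointwise estimate $\bigl|\max(|f_{n}|,|f|,|f_{n}-f|)-|f|\bigr|\le|f_{n}-f|$ (so $\mathbf{M}(f_{n},f)\to\ell>0$), and the latter from the bi-Lipschitz equivalence. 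The main delicate point of the plan is the ratio-of-norms argument pinning down the limiting scale $\ell$; once that is secured, the rest of the argument is essentially routine.
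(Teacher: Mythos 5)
Your proof is correct, and for $(L(\R),D)$ it is essentially the paper's argument: both reduce $D$-Cauchyness to $L^1$-Cauchyness via the two-sided comparison $2^{1/p-1}\|f-g\|_{L^1}\le D(f,g)\le\|f-g\|_{L^1}$ (the paper uses only the cruder observation that each of $\int(f-g)^{\pm}\,dx$ is at most $D(f,g)$) and then invoke completeness of $L^1$. For $(L(\R),D_N)$ the two arguments genuinely diverge at the key step, namely showing that a $D_N$-Cauchy sequence is $D$-Cauchy. The paper does this through Lemma \ref{bounded}, which asserts that a $D_N$-Cauchy sequence is bounded in $L^1$ and is proved by contraposition: an unbounded sequence is normalized and pairs at $D_N$-distance at least $\nicefrac{7}{20}$ are exhibited. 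You instead extract the quantitative lower bound $D_N(f_n,f_m)\ge 2^{1/p-1}\,|a_n-a_m|/(a_n+a_m)$ with $a_n=\|f_n\|_{L^1}$, which yields not merely boundedness but convergence $a_n\to\ell\ge 0$, together with a clean dichotomy between the degenerate case ($f_n$ eventually zero) and the case $\ell>0$. Your route buys two things the paper's does not: a lower bound $\M(f_n,f_m)\ge\max(a_n,a_m)\to\ell>0$ on the denominators, and, more importantly, the explicit verification that $D_N(f_n,f)\to 0$ for the $L^1$-limit $f$, a step the paper's proof omits; this step is not automatic, since $L^1$-convergence to $0$ does not imply $D_N$-convergence to $0$ (indeed $D_N(0,g)\ge 2^{1/p-1}$ for every $g\ne 0$), so your case analysis is doing real work there. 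One caveat you share with the paper: both arguments return the $L^1$-limit to $L(\R)$ without comment, even though $L(\R)$ was introduced as a set of \emph{bounded} integrable functions and boundedness is not preserved under $L^1$ limits; this is a defect of the theorem's setup rather than of either proof.
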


\subsubsection{Geometric interpretation of the new distances}
We illustrate the geometry of new distances in Figure \ref{fig:geometry}. Consider two functions $f(x)$ and $g(x)$. Let $A_{f>g}$ be the total area (volume) of the space between $f$ and $g$ where $f > g$ and $A_{f<g}$ be the area where $g > f$. Our unnormalized distance corresponds to the $L^p$ norm of the vector $(A_{f>g}, A_{f<g})$. A standard $L^p$ distance between $f$ and $g$ would instead integrate $|f - g|^p$ and be more sensitive to isolated large differences. Our metrics mitigate this effect through dimension reduction to $\mathbb{R}^2$. The similarity between $f$ and $g$ depends on the balance of $A_{f>g}$ and $A_{f<g}$ as a function of $p$.

\begin{figure}[t!]
\centering
    \includegraphics[width=0.5\columnwidth]{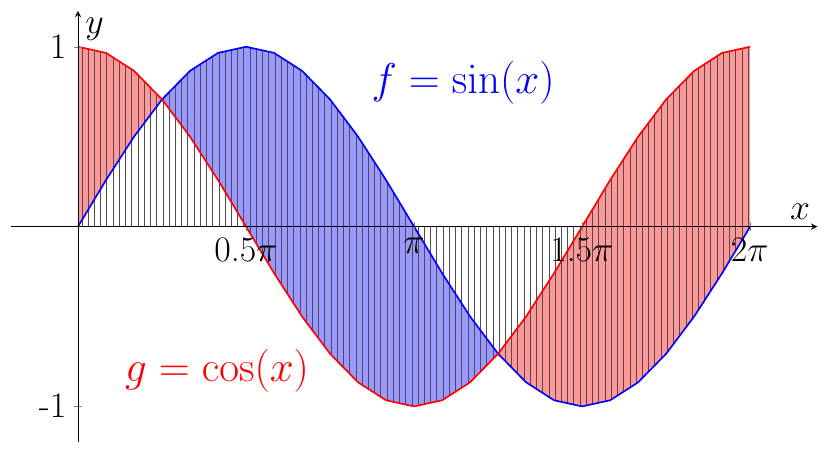}
    \caption{Geometry of the new distances between two functions $f(x)=\sin (x)$ and $g(x)=\cos (x)$ over $[0,2\pi]$. The blue area corresponds to $A_{f>g}$, whereas the red area corresponds to $A_{f<g}$. The vertical lines visualize the normalization factor from Eq.~\ref{eq:6}.}
    \label{fig:geometry}
\end{figure}

\section{Connections with other dissimilarity measures}
\label{connection}

\subsection{Equivalence with Minkowski distances}
In the previous section we noted that our various metrics reduce to certain well-known metrics when the parameter $p$ is either $1$ or $\infty$.
In the case of our unnormalized metrics on vector spaces from Eq.~\ref{eq:7}, we can establish a stronger relationship with the Minkowski distance for $p\geq1$ and $p=\infty$.
%
\begin{prop}
\label{equivalence}
The new metric $\d{p}$ and the Minkowski distance $d_{\textrm{M}}^p$ on $\R^k$,  when they share the same parameter $p$ where $p\geq 0$ or $p=\infty$, are \emph{equivalent metrics}; that is, there exist positive constants $\alpha$ and $\beta$ such that for all $\mathbf{x},\mathbf{y}\in \R^k$ it holds that
\begin{equation}
\label{equivalence_metrics}
\alpha \d{p}(\mathbf{x},\mathbf{y}) \leq d_{\textrm{M}}^p(\mathbf{x},\mathbf{y}) \leq \beta \d{p}(\mathbf{x},\mathbf{y}).
\end{equation}
\end{prop}
This proposition can be proved by invoking H\"{o}lder's inequality and some algebraic manipulations (Appendix). 

\subsection{Comparisons with $f$-divergences for probability distributions}

Suppose $P$ and $Q$ are probability distributions for some random variables defined on a Lebesgue-measurable set in $\mathbb{R}$ with probability densities $h$ and $g$ in $L(\R)$ respectively. Then $\d{p}(h,g)$ or $\dN{p}(h,g)$ provide a measure of dissimilarity between $h$ and $g$. In comparison, an $f$-divergence of $P$ with respect to $Q$ is the expectation of $f(dP/dQ)$ under the distribution $Q$ \cite{csiszar1967} with regularity constraints on $f$. Replacing $f(t)$ by $\frac{1}{2}|t-1|$, $(1-\sqrt{t})^2$ or $t\log(t)$ gives the total variation $\operatorname{TV}(P,Q)$, the Hellinger distance $H(P,Q)$, or the Kullback-Leibler~(KL) divergence $D_{\textrm{KL}}(P||Q)$, respectively \cite{Liese2006}.

The total variation and the Hellinger distance are metrics while the KL divergence is not. However, the KL divergence $D_{\textrm{KL}}(P||Q)$ is meaningful as it measures the information theoretic divergence when $P$ is the true underlying distribution for the model in hand and $Q$ is the presumed distribution in model development. The information theoretic bounds on compressibility and relationship to maximum-likelihood inference of $D_{\textrm{KL}}(P||Q)$ are well understood \cite{Cover2006}.

When $p=1$, our distance on probability densities $h$ and $g$ is equivalent to the total variation of their distributions $P$ and $Q$ as $\d{p}(h,g) = 2\operatorname {TV}(P,Q)$. Based on this equality we are able to establish the relationships between $\d{p}(h,g)$ to their corresponding KL divergence and Hellinger distance.

\begin{prop}
\label{KL}
Let $P$ and $Q$ be probability distributions with respect to some real random variables with probability densities $h$ and $g$ in $L(\R)$, respectively. For any $p\geq 1$ it holds that
\[
\d{p}(h,g)\leq \sqrt{2\min (D_{\textrm{KL}}(P||Q),D_{\textrm{KL}}(Q||P))}.\]
\end{prop}
The result directly follows from Pinsker's inequality \cite{Pinsker1964} and the symmetry of metrics. Interestingly, the converse does not hold. That is, there exist sequences of probability density functions $\{h_n\}$ and $\{g_n\}$ such that $\d{p}(h_n,g_n) \rightarrow 0$ but $D_{\textrm{KL}}(P_n||Q_n) \rightarrow \infty$. 
\begin{prop}
\label{Hellinger}
Under the same conditions as in Proposition \ref{KL}, it holds that
\[
2H(P,Q)^2 \leq \d{p}(h,g)\leq 2\sqrt{2} H(P,Q).\]
\end{prop}
The conclusion follows from $H(P,Q)^2 \leq \operatorname{TV}(P,Q)\leq  \sqrt{2}H(P,Q)$; see \cite{lecam1973convergence}.

Proposition \ref{KL} and Proposition \ref{Hellinger} (up to a multiplicative constant) also apply to $\dN{p}(h,g)$ as $\frac{1}{2}\d{p}(h,g) \leq \dN{p}(h,g)\leq \d{p}(h,g)$ since $h$ and $g$ are densities. These inequalities can prove useful in establishing lower risk bounds in applications that directly minimize the new distances as opposed to $f$-divergences \cite{Guntuboyina2011}.

\section{Empirical Investigation} \label{experiments}
We evaluate the performance of the new metrics on real-valued and text document data using classification problems. We then compare them against a wide range of dissimilarity measures. Classification on each data set was carried out by applying the KNN algorithm \cite{Cover1967} with $K$ taking all values from $\{1, 3, 5, \ldots, \lceil\sqrt{n}\,\rceil \}$, where $n$ is the data set size. In all experiments we conducted a leave-one-out accuracy estimation and compared distance functions based on the maximum accuracy achieved over all $K$. To compare distances $d_1$ and $d_2$ on a particular data set, we scored a ``win'' to the one with the larger maximum accuracy or assigned half a win to each in case of a tie. We then counted the number of wins in a ``tournament'' where each distance was pairwise-compared with all its competitors on each data set. The expectation is that a better distance will lead to higher classification performance and more wins.

The new metrics were compared to the following distances: (1) Minkowski ($L^p$) family; (2) fractional $L^p$ distances; i.e., Minkowski distances with $0<p<1$; (3) normalized $L^p$ distances; i.e., $\left\Vert \mathbf{x}-\mathbf{y}\right\Vert _{p}/(\left\Vert \mathbf{x}\right\Vert _{p}+\left\Vert \mathbf{y}\right\Vert _{p})$; (4) cosine distance; (5) one minus Pearson's and Spearman's correlation coefficient; (6) $f$-divergences; i.e., Hellinger distance and total variation; and (7) the fractional equivalent of all our metrics ($f^p$ and $f^p_N$, $0<p<1$). Note that all Minkowski distances, Hellinger distance, and total variation (applicable only to nonnegative inputs) are metrics. However, all fractional distances, all normalized $L^p$ distances (except when $p=2$, when it is a metric \cite{book}), cosine distance, and one minus the correlation coefficient are not metrics. For example, the cosine distance on $\mathbb{R}^k - \left \{ 0 \right \}^k$ violates the identity of indiscernibles and the triangle inequality. 
For all $L^p$ and $d^p$ distances, we varied $p$ from $\{1, 2, 4, 8, 16, 32, \infty \}$ and for fractional distances from $\{ \nicefrac{1}{2}, \nicefrac{1}{4}, \nicefrac{1}{8}, \nicefrac{1}{16}, \nicefrac{1}{32} \}$.

\vspace{0.2cm}
\subsection{Evaluation results}
The evaluation on real-valued data was performed over thirty z-score normalized data sets from the UCI Machine Learning Repository \cite{Lichman2013}; summarized in the Appendix. Figure~\ref{fig:wins_real_max} shows the number of wins per data set with the variation assessed by bootstrapping. That is, the set of data sets was sampled with replacement 1000 times from which wins and losses were counted as described above. The top performing distance measure on these relatively low-dimensional problems was the cityblock distance, which belongs to both $L^p$ and $d^p$ family when $p=1$, followed by $d^2$ and $d^2_N$ metrics. Importantly, we also find that the $d^p$ and $d_N^p$ metrics outperform their $L^p$ counterparts when $p > 1$.

\begin{figure*}[ht!]
    \centering
    \includegraphics[width=1.0\columnwidth]{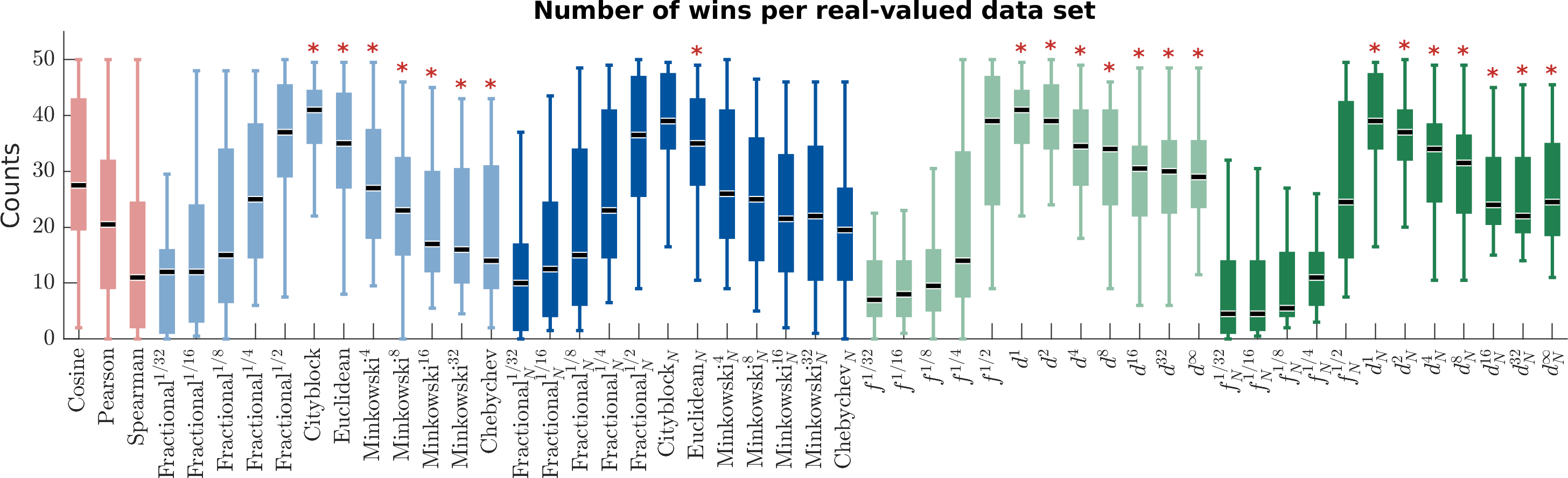}
    \caption{Comparison between dissimilarities. The functions are color coded as follows: $L^p$ family (light blue), normalized $L^p$ family (dark blue), $d^p$ family (light green), normalized $d^p$ family (dark green), cosine distance and the correlation coefficients (light red). All metrics are labeled by an asterisk.}
    \label{fig:wins_real_max}
\end{figure*}
%


The performance evaluation on high-dimensional data was carried out using tf-idf encoding \cite{Tan2006} on ten text document data sets. The first data set was constructed in this work by using abstracts from five life sciences journals with the task of predicting the journal each paper was published in. The remaining data sets included Cade12 and webkb from \cite{Cachopo2007}; 20NewsGroups downloaded via scikit-learn library; MovieReview from \cite{Pang2004}; CNAE9, Reuters and TTC-3600 from the UCI Machine Learning Repository and two data sets extracted from the literature \cite{Dalkilic2006, greene06icml}. 
Each data set was treated as a multi-class classification problem; see Appendix.

Figure~\ref{fig:wins_text_max} shows the number of wins per text data set. Here we find that normalized distance functions outperformed their unnormalized counterparts. In fact, as a group, the $d_N^p$ metrics show the best performance, with the maximum reached when $p=2$. The $d_p$ and $d_N^p$ metrics again outperformed their $L^p$ counterparts. Interestingly, the normalized $L^1$ distance, cosine distance and one minus correlation coefficients show excellent performance on tf-idf data. However, neither of these functions is a metric. Therefore, the $d_N^p$ metrics are the only group that provide both high performance accuracy and theoretical guarantees reserved for metric spaces. Specifically, $d_N^2$ performs significantly better than the normalized Euclidean metric on these high dimensional datasets ($P = 0.0107$; Bionomial test). Additional results obtained by varying the performance evaluation criteria (mean vs.~maximum accuracy over all $K$), data types (sparse binary sets), and data normalization procedures (z-score, min-max, unit) are provided in the Appendix.


\begin{figure*}[ht!]
    \centering
    \includegraphics[width=1.0\columnwidth]{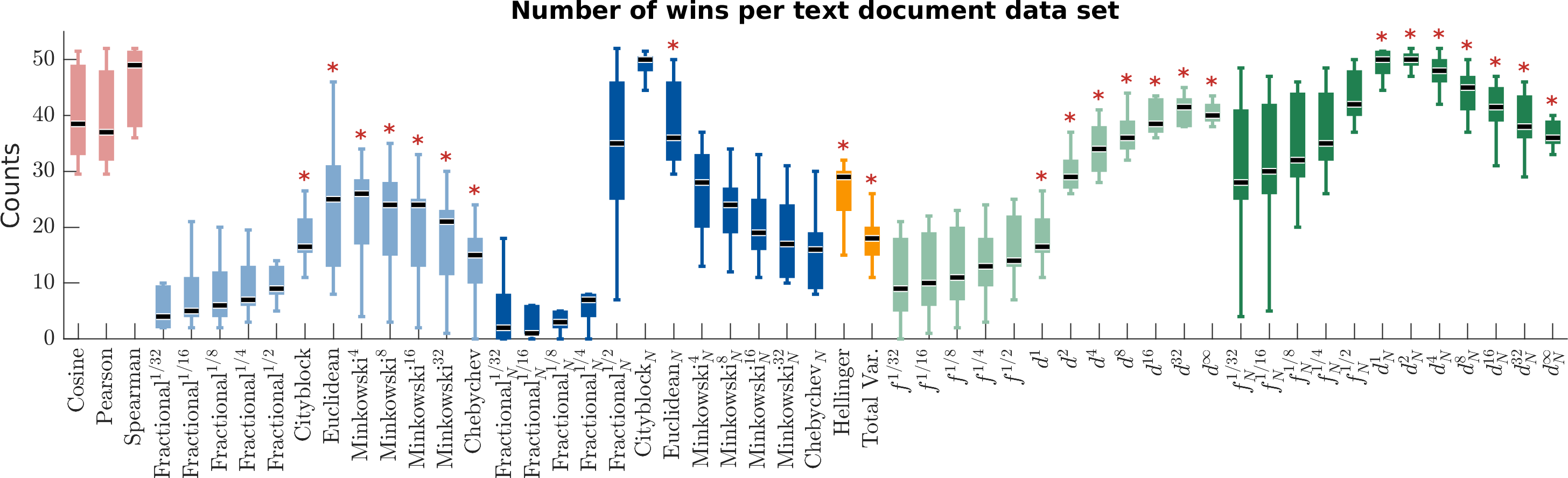}
    \caption{Comparison between dissimilarities. The functions are color coded as follows: $L^p$ family (light blue), normalized $L^p$ family (dark blue), $d^p$ family (light green), normalized $d^p$ family (dark green); $f$-divergences (light yellow), cosine distance and the correlation coefficients (light red). All metrics are labeled by an asterisk.}
    \label{fig:wins_text_max}
\end{figure*}


Overall, the following results stand out. First, metrics have generally outperformed non-metrics and fractional distances did not provide the expected improvement on high-dimensional data. The $d^p$ distances outperformed their $L^p$ counterparts over all $p>1$. As expected, the cosine distance worked well on high-dimensional data, but surprisingly the two correlation coefficients were as good. Finally, when averaged over the two groups of data (Figures \ref{fig:wins_real_max}-\ref{fig:wins_text_max}), $d^2_N$, $d^1_N$, and normalized $L^1$ distances were the top three performing distance functions. Combined with the fact that the normalized $L^1$ distance is not a metric, these results provide compelling evidence that the new metrics fare well against all competing distances.

\subsection{Hubness in high-dimensional spaces}
Recent work has shown that high-dimensional data sets suffer from the effect of hubness. That is, as the data dimensionality increases, a smaller fraction of points tend to find themselves as nearest neighbors of many other points in the data set, whereas a larger fraction of points tend to be within no one's nearest neighbors \cite{radovanovic2010hubs}. This effect is related to the poor performance of traditional distance functions in high-dimensional spaces \cite{beyer1999nearest}. 

We investigated the effect of hubness for the $d^p$ and $d^p_N$ metrics on both simulated and real data. Following \cite{radovanovic2010hubs}, for every data point $\mathbf{x}$ we first counted the number of other points in the data set such that $\mathbf{x}$ was within their $K$ closest neighbors, $N_K(\mathbf{x})$. We then plot the distribution of $N_5$ on an i.i.d.~Gaussian data and two real-life data sets from the collection used in this work (Figure \ref{fig:hubness}). We find that the $d^p_N$ metrics show similar hubness effects and resilience to high dimension as the cosine distance, while at the same time being a metric. There do not exist similar normalizers for the $L^p$ family, except when $p=2$ \cite{book}.


\subsection{Performance insights}
A potential factor contributing to the success of the $\dN{p}$ metrics on the tf-idf data could be the lack of translational invariance. We call a metric $d$ on $X$ \textit{translation-invariant} if $d(x,y) = d(x+z,y+z)$ for all $x,y,z \in X$. A number of classical metrics fall into this group, such as all norm-induced metrics; e.g., Minkowki distances. The normalized metric $\dN{p}$ is not translation-invariant as demonstrated by a simple example that $\forall p>1, \dN{p}(0,1)\neq \dN{p}(1,2)$. This effect, however, is important for quantifying distance in semantic data such as text documents. To illustrate this, consider the following bag-of-words features of two pairs of article abstracts in $\R^k$ for $p=1$;  $\dN{p}((1,0,\ldots,0),(0,\ldots, 0)) =1$ while $\dN{p}((100,100,\ldots,100),(99,100,\ldots,100)) = 1/(100k)$. The two pairs of elements have equal Minkowski distance of $1$, but the elements $(100,100,\ldots,100)$ and $(99,100,\ldots,100)$ are more related as they share a large number of words. The normalized metric $\dN{p}$ captures that strong similarity by incorporating built-in information of the data as indicated by the results on text data (Figure \ref{fig:wins_text_max}). The tf-idf data studied here can also be viewed as ontological data with a trivial structure, the concept of which will be introduced in Section \ref{ontology}.


\subsection{Computational efficiency}
Computing the Minkowski distance of order $p$ between two $k$-dimensional vectors requires $2k-1$ additions and $k$ exponentiations, before calculating the $p$-th root. Our unnormalized metric from Eq.~\ref{eq:7} requires $2k-1$ additions, $k$ comparisons to a $0$, and only $2$ exponentiations. Since exponentiation is slow, especially for larger $p$, the new metric is faster to compute. Both classes of metrics have the same asymptotic complexity of $O(k)$.

\begin{figure}
    \centering
    \includegraphics[width=.45\linewidth]{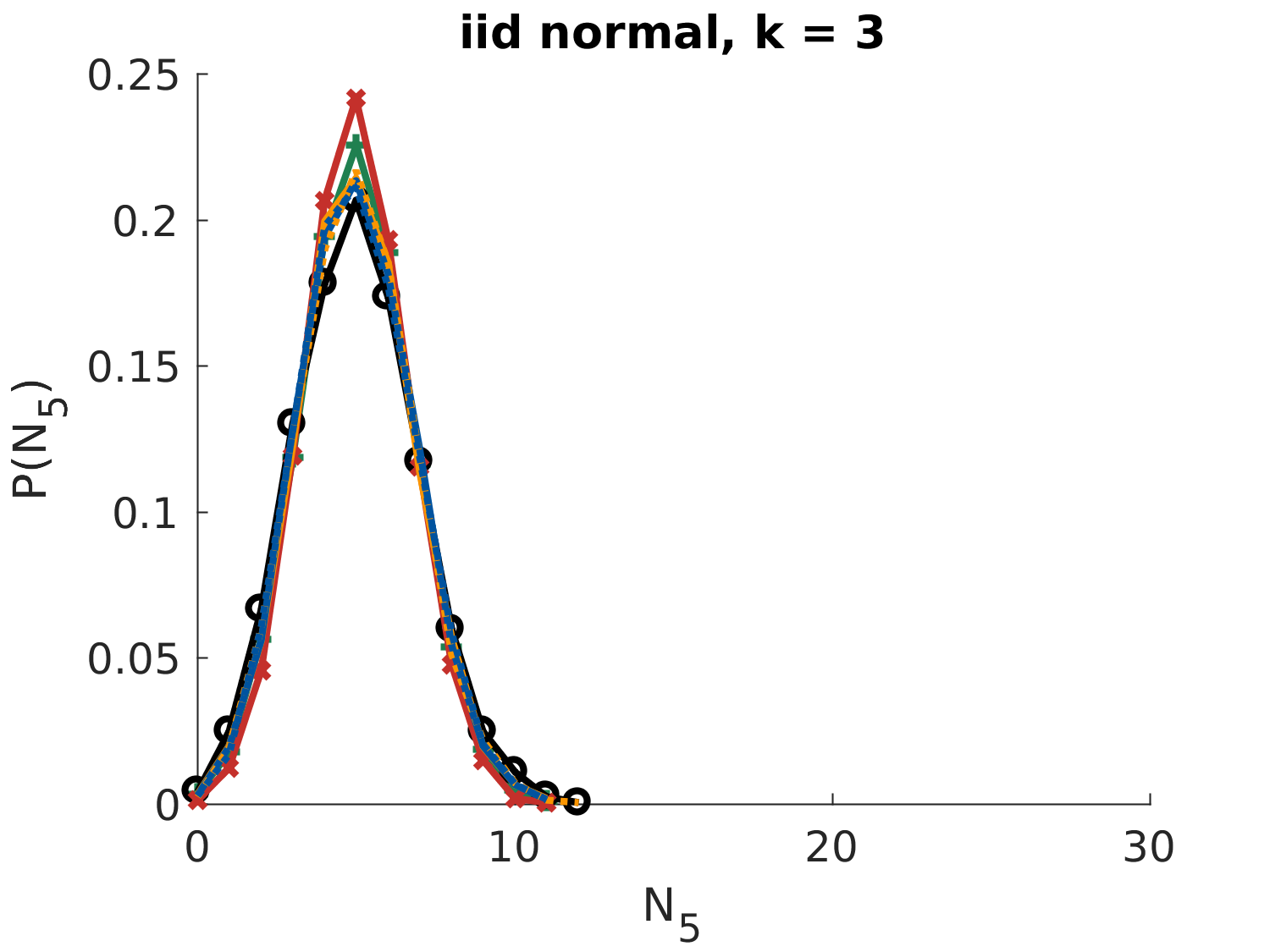}
    \includegraphics[width=.45\linewidth]{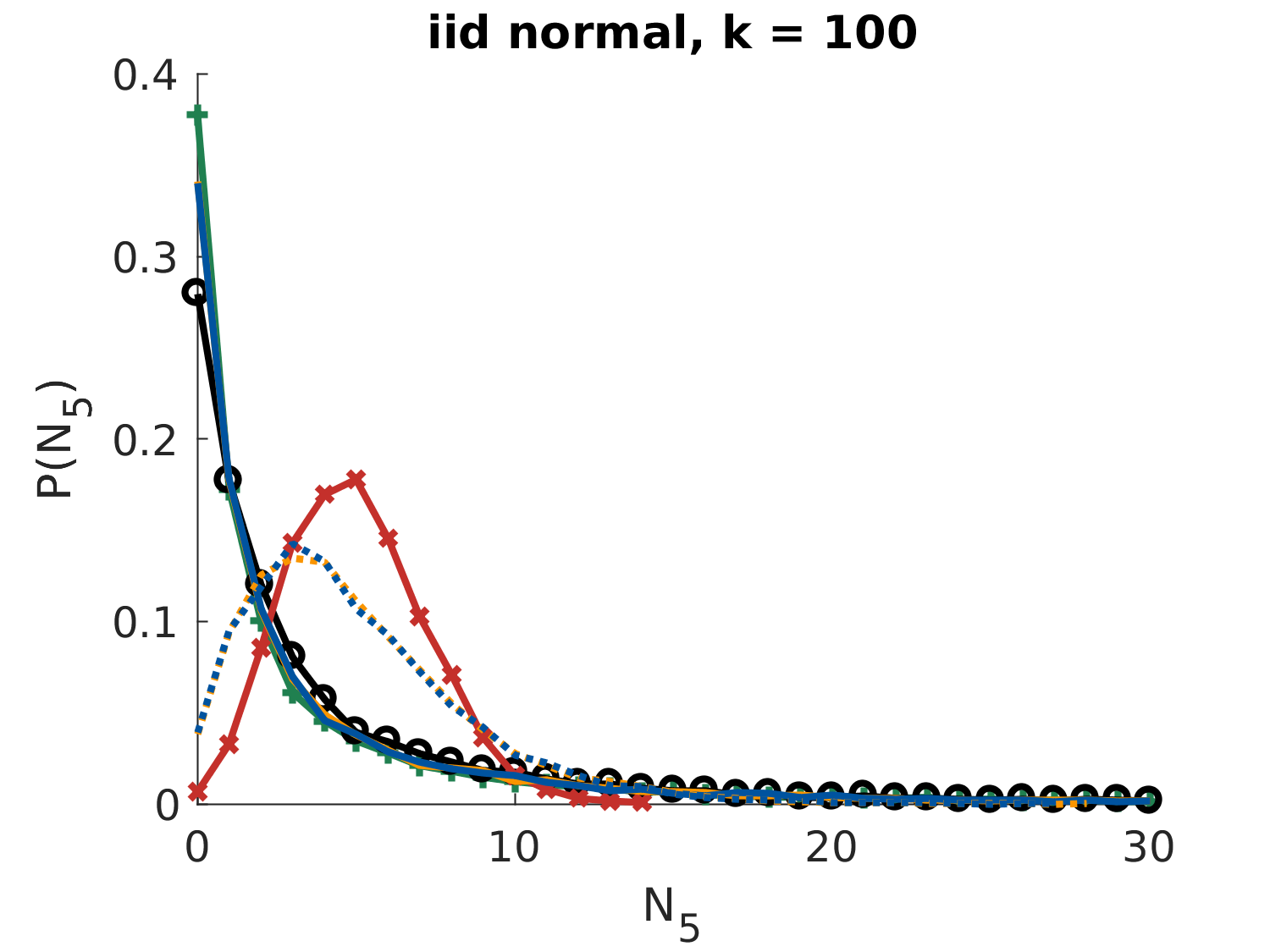} \\
    \includegraphics[width=.45\linewidth]{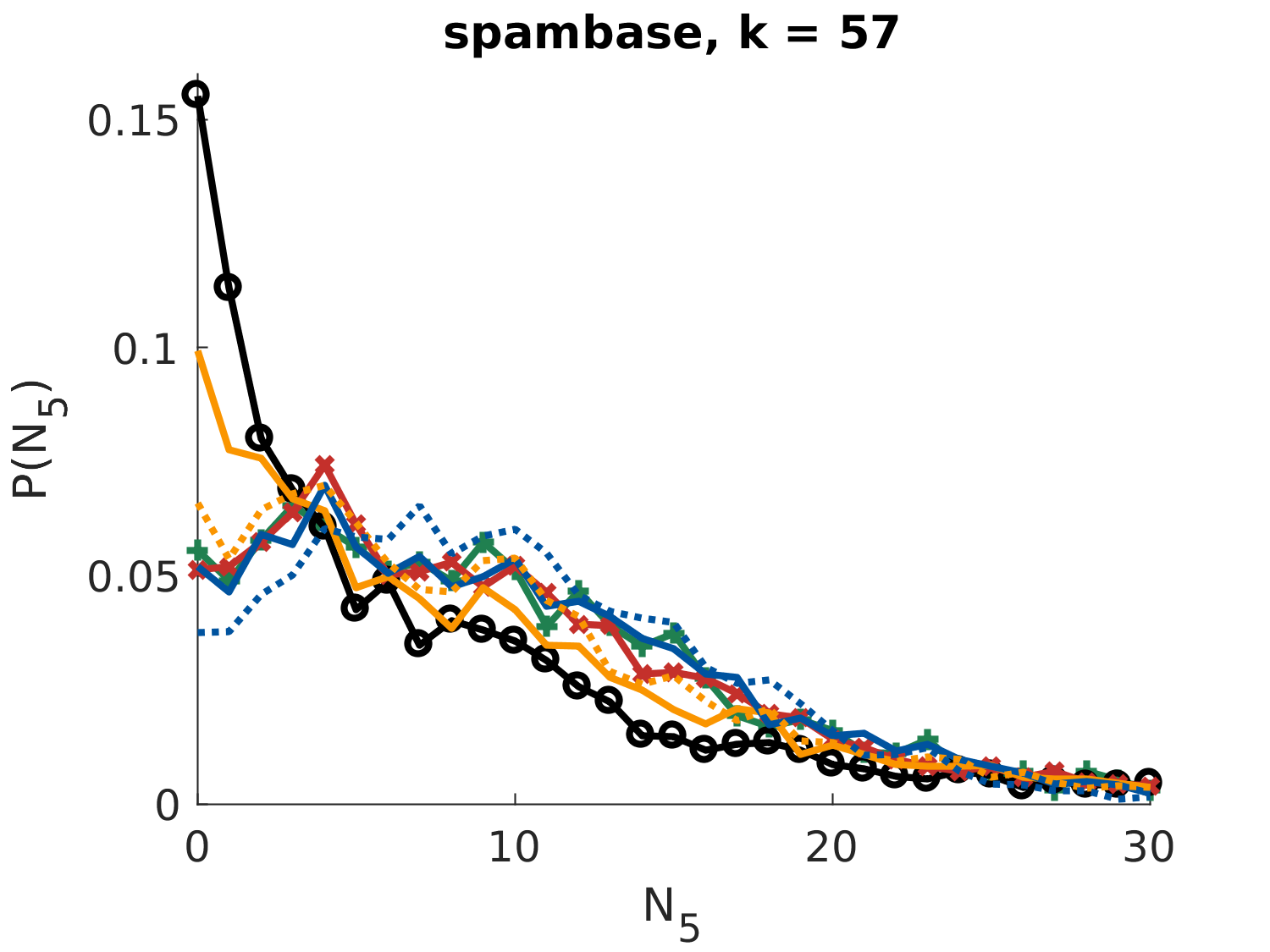}
    \includegraphics[width=.45\linewidth]{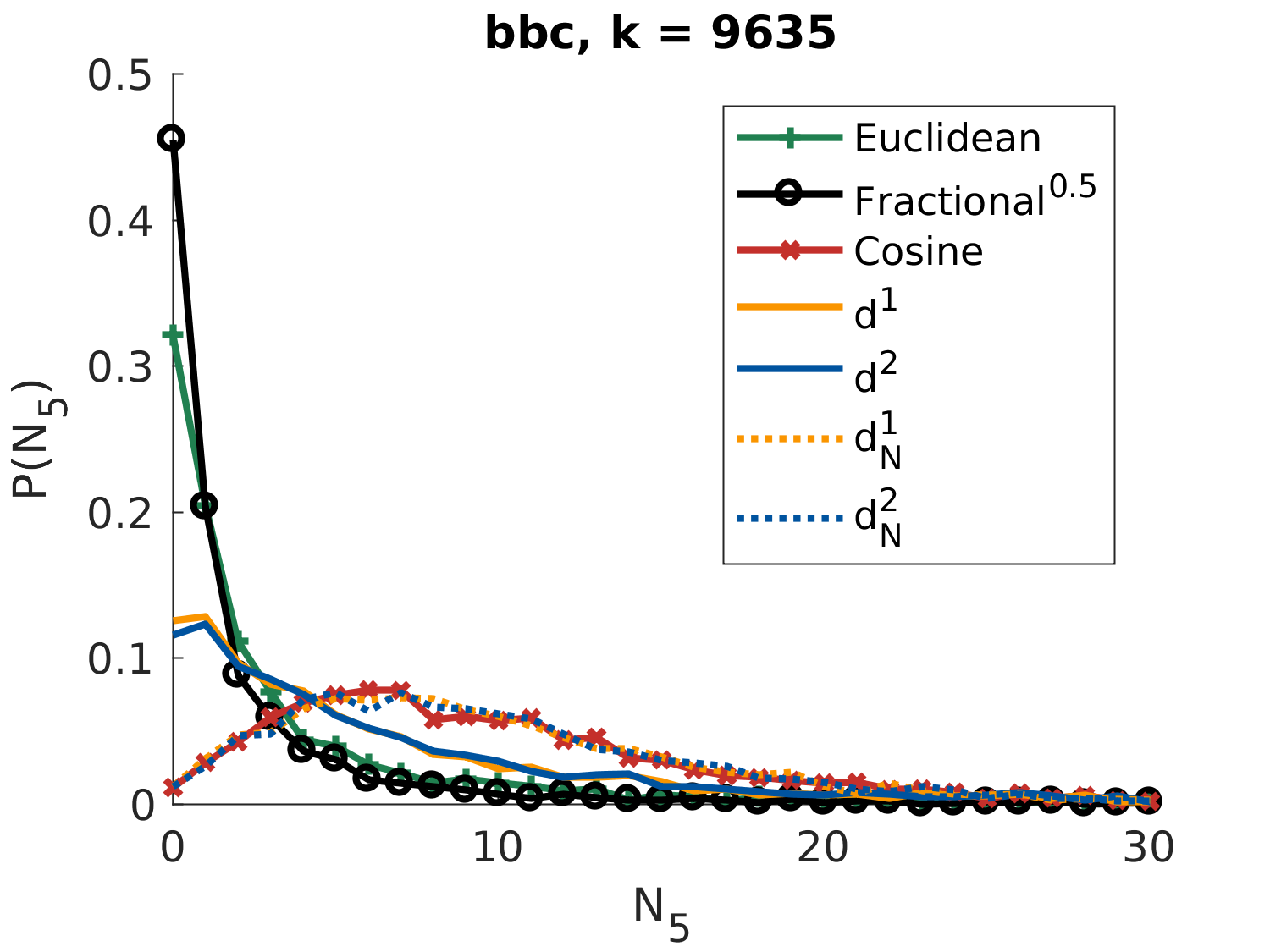}
    \caption{Hubness in low- and high-dimensional data sets. The upper panels show the distribution of $N_5$ for simulated data sets with the standard normal distribution ($n=10000$, $k \in \left \{3, 100\right \}$). The lower panels show the distribution of $N_5$ for the \textit{spambase}  ($n=4601$, $k = 57$) and \textit{bbc} ($n=2225$, $k = 9635$) data sets.}
    \label{fig:hubness}
\end{figure}
\section{Application to ontologies}
\label{ontology}
Modern classification approaches increasingly rely on ontological output spaces \cite{Grosshans2014,Movshovitz2015}. An ontology $\mathcal{O}=(V,E)$ is a directed acyclic graph with a set of vertices (concepts) $V$ and a set of edges (relational ties) $E\subset V\times V$. A news article, for instance, covering aspects of sports injuries might be labeled by the term ``sports'', ``medicine'', but also ``sports medicine'' that is a subcategory of both sports and medical articles. Similarly, a protein associated with the terms ``transferase'' and ``oxidoreductase'' could also be associated with a more general term ``enzyme''. In terms of class labels, a news article or a protein function can be seen as a \emph{consistent subgraph} $F\subseteq V$ of the larger ontology graph. By saying consistent, we mean that if a vertex $v$ belongs to $F$, then all the ancestors of $v$ up to the root(s) of the ontology must also belong to $F$. This consistency requirement follows from the transitive relationships specified on edges that are commonly used; e.g., \textsf{is-a} and \textsf{part-of}. In some domains such as computational biology, a subgraph $F$ corresponding to an experimentally characterized protein function contains 10-100 nodes, whereas the ontology graph consists of 1000-10000 nodes \cite{Robinson2011}. 

Before we introduce metrics on ontological annotations, we briefly review relevant theoretical concepts. Suppose that the underlying probabilistic model according to which ontological annotations have been generated is a Bayesian network structured according to the ontology $\mathcal{O}$. That is, we consider that each concept in the ontology is a binary random variable and that the graph structure specifies the conditional independence relationships in the network. Then, using the standard Bayesian network factorization we write the marginal probability for any consistent subgraph $F$ as
\[
P(F)=\prod_{v\in F}P(v|\mathrm{Parents}(v)),
\]
\noindent where $P(v|\mathrm{Parents}(v))$ is the probability that node $v$ is part of an ontological annotation given that all of its parents are part of the annotation. Due to consistency, the marginalization can be performed in a straightforward manner from the leaves of the network towards the root. This marginalization is reasonable in open-world domains such as molecular biology because some activities are never tested and those that are might not be fully observable. Thus, treating nodes not in $F$ as unknown and marginalizing over them is intuitive. Observe that each conditional probability table in this (restricted) Bayesian network needs to store a single number; i.e., the concept $v$ can be present only if all of its parents are part of the annotation. If any of the parents is not a part of the annotation $F$, $v$ is guaranteed to not be in $F$.

\subsection{Metrics on ontologies}
We express the information content of a consistent subgraph $F$ as
\noindent 
\begin{align*}
i(F) & =\log\frac{1}{P(F)}
  ={\textstyle \sum}_{v\in F}ia(v),
\end{align*}
%
%
\noindent where $ia(v) = -\log P(v|\mathrm{Parents}(v))$ is referred to as information accretion \cite{Clark2013}. This term corresponds to the additional information inherent to the node $v$ under the assumption that all its parents are already present in the annotation of the object.
%
%
%
%
%

We can now compare two ontological annotations $F$ and $G$. For the moment, suppose that annotation $G$ is a prediction of $F$. We use the term \emph{misinformation} to refer to the cumulative information content of the nodes in $G$ that are not part of the true annotation $F$; i.e., it gives the total information content along all incorrect paths in $G$. Similarly, the \emph{remaining uncertainty} gives the overall information content corresponding to the nodes in $F$ that are not included in the predicted graph $G$ (Figure \ref{fig:comparison}). More formally, misinformation and remaining uncertainty are defined as
\[
mi(F,G)=\sum_{v\in F-G}ia(v)\:\:\: \textrm{and} \:\:\: ru(F,G)=\sum_{v\in G-F}ia(v).
\]
%
\begin{figure}
\begin{centering}
\includegraphics[width=0.65\columnwidth]{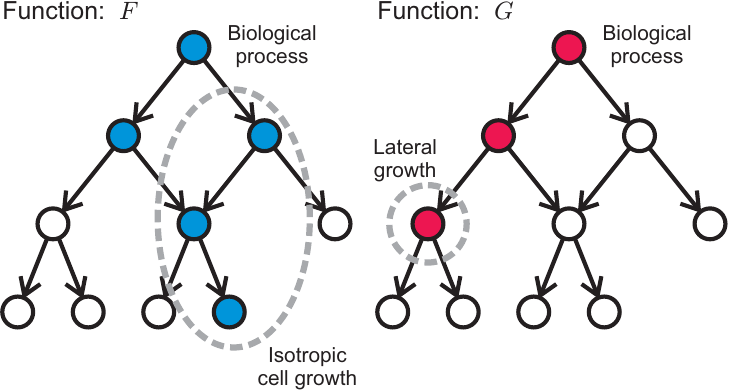}
\par
\end{centering}
\protect\caption{Illustration of the calculation of the remaining uncertainty and misinformation for two proteins with their ontological annotations: $F$ (true, blue) and $G$ (predicted, red). The circled nodes contribute to the remaining uncertainty (blue nodes, left) and misinformation (red node, right).}
\label{fig:comparison}
\end{figure}

\noindent Let now $X$ be a non-empty set of all consistent subgraphs generated according to a probability distribution specified by the Bayesian network. We define a function $\d{p}:X\times X\rightarrow\mathbb{R}$ as
\begin{equation}
\d{p}(F,G)=\left(ru^{p}(F,G)+mi^{p}(F,G)\right)^{\frac{1}{p}}.\label{eq:3}
\end{equation}
%
\noindent We refer to the function $\d{p}$ as semantic distance. Similarly, we define another function $\dN{p}:X\times X\rightarrow\mathbb{R}$ as
\begin{equation}
\dN{p}(F,G)=\frac{\left(ru^{p}(F,G)+mi^{p}(F,G)\right)^{\frac{1}{p}}}{\sum_{v\in F\cup G}ia(v)}.\label{eq:4}
\end{equation}
%
\noindent We refer to the function $\dN{p}$ as normalized semantic distance.
%
%
\begin{thm}
$(X, \d{p})$ is a metric space.
\label{ThmOntologyU}
\end{thm}
\begin{thm}
$(X, \dN{p})$ is a metric space. In addition, $\dN{p}: X \times X \rightarrow[0,1]$.
\label{ThmOntologyN}
\end{thm}

\subsection{Indirect evaluations using ontological annotations for proteins}
Evaluating dissimilarity measures between consistent subgraphs is difficult because ontological annotations are usually class labels rather than attributes, and this excludes a classification-based evaluation from Section \ref{experiments}. Therefore, we use an indirect approach and assess the quality of the proposed semantic distance between ontological annotations (class labels) using domain knowledge on the set of protein sequences each of which has a consistent subgraph associated with it as a class label.

In the first experiment, we take a set of protein sequence pairs with their class labels from the UniProt database and compare sequence similarity to class-label distance with a biologically justified expectation that the higher sequence similarity results in lower class label dissimilarity. Figure~\ref{fig:semantic_dist_ontologies} shows the relationship between sequence similarity and class-label distance for a set of three separate ontologies in the Gene Ontology \cite{Ashburner2000}. As expected, we observed significant difference for the two groups of sequences in each category. Statistical tests give $P$-values close to zero on each data set.


In the second experiment we perform clustering of five species for which we could extract a sufficient number of class labels. Hierarchical clustering on the groups of ontological annotations (one group for each species) was used to form an evolutionary tree; for simplicity, we refer to the tree derived solely from functional information as a \emph{functional phylogeny}. A good class-label dissimilarity is expected to provide the same evolutionary tree as the one that has been determined by evolutionary biologists based on sequence. 


Using the Molecular Function and Cellular Component functional annotations of the Gene Ontology, our clustering approach did recover the correct relationships among species (Figure \ref{fig:phylogenetic_tree}, left tree). This result is gratifying, especially as we might expect many similar functions to be present in the single-celled organisms (\textit{E.~coli} and \textit{S.~cerevisiae}). 
However, using the Biological Process annotations did not result in the correct phylogeny, as the positions of \textit{S.~cerevisiae} and \textit{A.~thaliana} were reversed (Figure \ref{fig:phylogenetic_tree}, right tree). 

The accuracy of the molecular function and cellular component annotations and the inaccuracy of the biological process annotations are consistent with the higher level of functional conservation for the less abstract annotations \cite{Rogers2009, Nehrt2011}, as greater conservation of function could result in more phylogenetic signal within this ontology. As a reminder, this algorithm only produces an unrooted topology among the species. It is up to the experimenters to root the tree with some expert knowledge, as we have done here.

\begin{figure}
    \centering
    \includegraphics[width=.8\columnwidth]{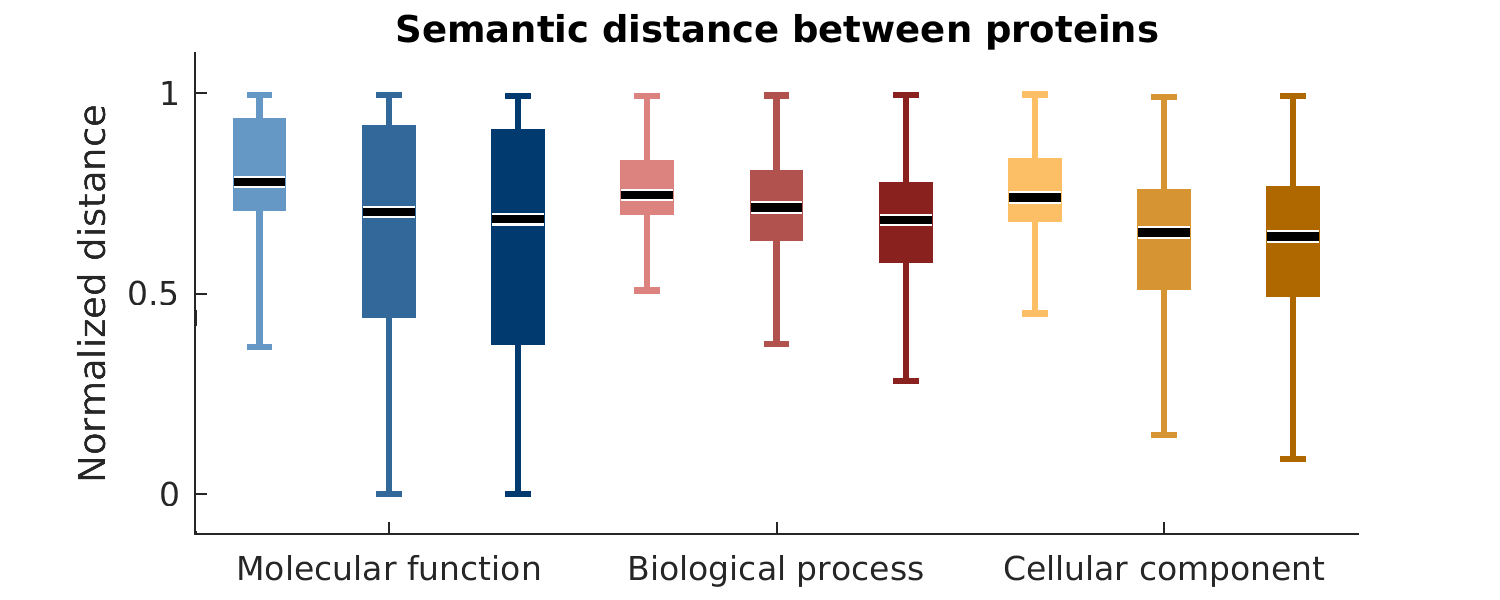}
    \caption{Indirect evaluation of semantic distance between ontological annotations. Distance comparisons are color-coded for the three Gene Ontology domains. In each domain, three boxes from light to dark show the pairwise distance ($\dN{2}$) between proteins with different similarity group: sequence similarity within $[0,1/3)$, $[1/3, 2/3)$ and $[2/3, 1]$; see Appendix for details related to sequence similarity calculation. Each box is sampled, with $N=5000$, from all human-mouse protein pairs. All paired differences are statistically significant based on the $t$-test after Bonferroni correction. }
    \label{fig:semantic_dist_ontologies}
\end{figure}

\begin{figure} [t]
    \centering
    \includegraphics[width=.7\columnwidth]{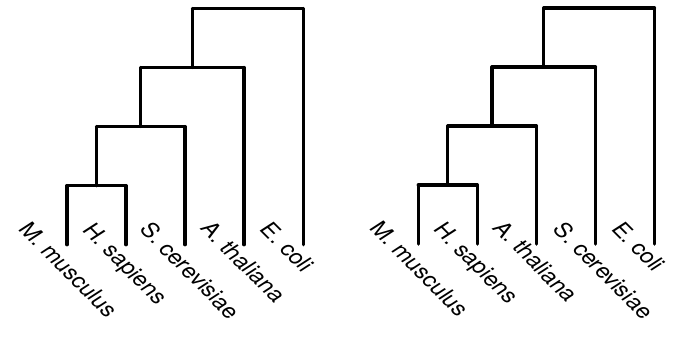}
    \caption{Functional phylogenetic trees for \textit{H.~sapiens}, \textit{M.~musculus}, \textit{S.~cerevisiae}, \textit{A.~thaliana} and \textit{E.~coli} in the Molecular function and Cellular component ontologies (left, \emph{correct}) and the Biological process ontology (right, \emph{incorrect}). See Appendix for further details.}
    \label{fig:phylogenetic_tree}
\end{figure}

\section{Related Work}
\label{relatedwork}

\subsection{Metric learning}
Learning distance metrics has recently emerged as one of the important topics in  machine learning and data mining \cite{xing2003distance}. In this approach, a metric itself depends on a set of parameters which are then learned with respect to a specific problem, data set and algorithm at hand; e.g., KNN \cite{weinberger2009distance}, K-means \cite{bilenko2004integrating}, sometimes under constraints such as sparseness. The most studied metric in this field is the Mahalanobis metric $d(\mathbf{x},\mathbf{y}) = \sqrt{(\mathbf{x}-\mathbf{y})^T\mathbf{A}(\mathbf{x}-\mathbf{y})}$, where $\mathbf{x},\mathbf{y} \in \mathbb{R}^k$ and $\mathbf{A} \in \mathbb{R}^{k\times k}$, which usually leads to convex formulations. This approach has important merits such as application-specific optimality. However, there is little theoretical understanding related to the consistency of metric learning 
\cite{bellet2013survey} as well as scalability issues caused by either large parameter space or the data set size \cite{yang2006distance,weinberger2009distance}. In contrast, default metrics, though not tailored for specific problems, are immediately available and usually present the first line of attack to a specific learning task. Incorporating the $d^p$ metrics into the metric learning framework might be an interesting research direction. 


\subsection{Default metrics} A body of research exists on handcrafting domain-specific distance functions \cite{book}. Distances on strings \cite{Yujian2007, Li2004}, rankings \cite{Kumar2010, Hassanzadeh2014}, or graphs \cite{Cao2013} have been actively researched in information retrieval, computational biology, computer vision, etc. Similarly, metrics on probability distributions have long been theoretically studied \cite{zolotarev1983probability}. Different metrics emerge for different reasons: some originated in functional analysis, such as the $L^p$ metric and the uniform metric, yet others due to their special properties; e.g., the Hellinger distance which admits decomposition under certain conditions \cite{zolotarev1983probability}. One application for such metrics is in stochastic programming and stability analysis in related problems \cite{rachev2002quantitative}. They are also used in statistical inference \cite{rao1973linear} or applied to measure the within- and between-population diversity in economics, genetics, etc. \cite{rao1982diversity}.

\section{Conclusions}
\label{conclusion}

This work was motivated by the desire to develop a family of metrics for learning across different domains, especially on high-dimensional and structured data that characterize many modern applications. Overall, we believe that the class of functions proposed in this work present sensible choices in various fields and believe that their good theoretical properties and strong empirical performance will play a positive role in their adoption. 





\bibliographystyle{abbrv}






\newpage

\begin{center}
\textbf{\Large{Appendix}}
\end{center}

\section{Lemmas and Proofs}
\label{lemma_proof}
%
\begin{proof}[Proof of Theorem \ref{ThmUnnormalizedSet}]
The only property of a metric not obviously satisfied by $d$ is the triangle inequality. Given arbitrary sets $A, B, C \in X$, we have
\begin{align*}
& \phantom{=.}d(A,B) + d(B,C) \\ & = (|A-B|^p+|B-A|^p)^\frac{1}{p}+(|B-C|^p+|C-B|^p)^\frac{1}{p}\\
                    &\geq ((|A-B|+|B-C|)^p+(|B-A|+|C-B|)^p)^\frac{1}{p}  \\
                    &\geq (|A-C|^p +|C-A|^p)^\frac{1}{p} \\ 
              	   & =d(A,C). \\
\end{align*}
The first inequality holds due to Minkowski inequality. 
The second inequality can be deduced from triangle inequality of Manhattan distance on binary vectors.
\end{proof}
\begin{proof}[Proof of Theorem \ref{ThmSet}]
As in the unnormalized case we only prove the triangle inequality. Let $A, B, C \in X$ be arbitrary sets. If at least one of $|A\cup B|=0$ and $|B\cup C| = 0$ holds, then the triangle inequality holds. So we only consider the cases when $|A\cup B|\neq 0$ and $|B\cup C| \neq 0$.
%
\noindent Let the cardinality of different sets shown in Figure \ref{fig:venn} be denoted by $a$, $b$, $\ldots$, $g$. Let $\tau = a + b + \dots +g $ and $\hp(x,y) = (x^p+y^p)^{\nicefrac{1}{p}}$. 
\begin{align*}
 & \phantom{=.}d_N(A,B) + d_N(B,C)  \\
 & = \frac{\hp(a+d,b+f)}{\tau-c}+\frac{\hp(b+e,c+d)}{\tau-a}\\
                    &\geq\frac{\hp(a+d,b+f)}{\tau}+\frac{\hp(b+e,c+d)}{\tau}\\
                    &\geq \frac{\hp(a+d+b+e,b+f+c+d)}{\tau}\\
                    &=   \hp\left(\frac{a+d+b+e}{\tau},\frac{b+f+c+d}{\tau}\right) \\
                     &\geq   \hp\left(\frac{a+d+b+e-b}{\tau-b},\frac{b+f+c+d-b}{\tau-b}\right) \\
                    &\geq\frac{\hp(a+d+e,f+c+d)}{\tau-b}\\
                    &\geq\frac{\hp(a+e,f+c)}{\tau-b}\\
              	   &=d_N(A,C).\\
\end{align*}
\begin{figure}[ht]
\centering
\includegraphics[scale=.9]{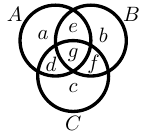}
\protect\caption{The Venn diagram and cardinality related to sets $A$, $B$ and $C$. \label{fig:venn}}
\end{figure}
The second inequality holds due to Minkowski inequality. The third inequality is true since we subtracted the same nonnegative number $b$ from both the numerator and denominator of the fraction with the fraction itself remaining in $[0,1]$ after the subtraction (the numerator is nonnegative before and after the subtraction). Hence, $d_N$ is a metric. 
It follows that $d_N$ is bounded in $[0,1]$ via the Minkowski inequality.
\end{proof}
%
\begin{proof}[Proof of Theorem \ref{ThmVectorU} and Theorem \ref{ThmVectorN}]
The metrics $d$ and $d_N$ defined in Eqs.~\ref{eq:5}-\ref{eq:6} can be reduced to be applied on vectors in $\R^k$ by rewriting $\mathbf{x} =(x_1,x_2,\ldots,x_k)$ as a map $f(t):=\sum x_i\mathbf{1}_{i-1 <t\leq i}$, where $\mathbf{1}_{(\cdot)}$ is an indicator function. Since $d$ and $d_N$ are metrics, proved in Theorem \ref{Thm1} and Theorem \ref{ThmR}, we obtain that $d$ and $d_N$ on $\R^k$ are also metrics.
\end{proof}
\begin{lem}
For any two real functions $f$ and $g$, we have $(f+g)^{+}\leq f^{+}+g^{+}$, and, $(f+g)^{-}\leq f^{-}+g^{-}$.
\label{Lemma1}
\end{lem}
\begin{proof}
We will prove $(f+g)^{+}\leq f^{+}+g^{+}$ first.
 Since
\[f^{+}+g^{+} = \max(f,0)+\max(g,0) \geq f+g,\]
and
\[f^{+}+g^{+} = \max(f,0)+\max(g,0) \geq 0,\]
we have that $ f^{+}+g^{+} \geq \max(f+g,0)=(f+g)^+$.

Notice that $f^{-} = (-f)^{+}$. To prove $(f+g)^{-}\leq f^{-}+g^{-}$, we have $(f+g)^{-} = (-f-g)^{+} \leq(-f)^{+}+(-g)^{+}=f^{-} + g^{-}$.
\end{proof}

\begin{lem}
For any real functions $f$, $g$ we have
\[
f^+ + g^+ = \min(|f|,|g|)\boldsymbol{1}_{\{ fg <0\}} +(f+g)^+.\]
\label{Lemma2}
\end{lem}
\begin{proof}
For $fg \geq 0$, it is not hard to see that $ f^+ + g^+ = (f+g)^+$.
When $ fg < 0$, without loss of generality, suppose that $f < 0$ and $g >0$. Then we have
\begin{align*}
 & \textstyle f^+ + g^+ -(f+g)^+  = g - (f+g)^+ \\
 & = \textstyle\begin{cases} g, & \text{if 
  \,} |f| > |g| \\ -f, & \text{if \,}|f| \leq |g| \end{cases} 
  = \min(|f|,|g|). 
\end{align*} 
\end{proof}

\begin{lem}
For any real functions $f,g$ and $h$, we have
\begin{align*}
       & \textstyle(f - g)^+ + (g - h)^+
   - \{\max(|g|,|f-g|,|g-h|)- \max(|f|,|h|,|f-h|)\}^+ 
  \geq  (f - h)^+.
\end{align*}
\label{TheInequality}
\end{lem}
\begin{proof}
By Lemma \ref{Lemma2}, it is equivalent to show  
\begin{align*}
      \min(|f - g|, |g - h|) \boldsymbol{1}_{\{(f-g)(g-h)<0\}} 
 \geq \{\max(|g|,|f-g|,|g-h|) - \max(|f|,|h|,|f-h|)\}^+.
\end{align*}
Since $ \min(|f - g|, |g - h|) \boldsymbol{1}_{\{(f-g)(g-h)<0\}} \geq 0 $, it suffices to show that it is no less than $\max(|g|,|f-g|,|g-h|) - \max(|f|,|h|,|f-h|)$.

Consider the case $(f-g)(g-h) \geq 0$ first. We have $f \geq g \geq h$ or $f \leq g \leq h$. $g$ is located between $f$ and $h$, therefore $|g| \leq \max{(|f|, |h|)}$ and $\max(|f-g|,|g-h|)\leq |f-h|$.
Then we have 
\begin{align*}
|g|-\max(|f|,|h|,|f-h|) \leq |g| - \max(|f|,|h|) \leq 0,
\end{align*}
and
\begin{align*}
\max(|f-g|,|g-h|)-\max(|f|,|h|,|f-h|) 
\leq  \max(|f-g|,|g-h|)-|f-h| \leq 0.
\end{align*}
This shows that 
\begin{align*}
 \max(|g|.|f-g|,|g-h|)-\max(|f|,|h|,|f-h|)
 \leq  \; 0 =  \min(|f - g|, |g - h|) \boldsymbol{1}_{\{(f-g)(g-h)<0\}}
\end{align*}
When $(f-g)(g-h) \leq 0$, we have the following two cases 
\begin{enumerate}
\item $\min(|f - g|, |g - h|)=|f-g|$
\item $\min(|f - g|, |g - h|)=|g-h|$
\end{enumerate}
For case 1, we want to show that $ |f - g|\geq \max(|g|,|g-h|) - \max(|f|,|h|,|f-h|)$. This is true since
\[
|f-g| \geq |g| - |f| \geq |g| -  \max(|f|,|h|,|f-h|),
\]
and
 \[
|f-g| \geq |g-h| - |f-h|\geq |g-h| -  \max(|f|,|h|,|f-h|).
\]
Combining those two inequalities we get $ |f - g|\geq \max(|g|,|g-h|) - \max(|f|,|h|,|f-h|)$.
For case 2, we want to show  $ |g - h|\geq \max(|g|,|f-g|) - \max(|f|,|h|,|f-h|)$. By the same analogy, since
\[
|g-h| \geq |g| - |h| \geq |g| -  \max(|f|,|h|,|f-h|),
\]
and
\[
|g-h| \geq |f-g| - |f-h|\geq |f-g| -  \max(|f|,|h|,|f-h|),
\]
we have $ |g - h|\geq \max(|g|,|g-h|) - \max(|f|,|h|,|f-h|)$.
Therefore this inequality still holds for $(f-g)(g-h) \leq 0$.
\end{proof}
\begin{lem}
\label{Trivial}
For any two real functions $f$ and $g$, we have
\[
   \max(f,g) = f + (g - f)^+.
\]
\end{lem}
%
%
\begin{lem}
    Any Cauchy sequence $\{f_n\}$ in $(L(\R),d_N)$  is bounded in $L^1$; i.e., $\int|f_n|\,dx < M$ for some constant $M>0$.
\label{bounded}    
\end{lem}
\begin{proof}
We instead prove the contrapositive version of the above statement. Suppose $\{f_n\}$ is a sequence in $(L(\R),d_N)$ that is unbounded in $L^1$, or equivalently $\int|f_n|\,dx = \infty$ as $n\rightarrow\infty$. Thus, we have 
\begin{align*}
&d_N(f_n,f_m) \\
=& \dfrac{((\int(f_n-f_m)^-\,dx)^p+(\int(f_n-f_m)^+\,dx)^p)^\frac{1}{p}}{\int \max(|f_n|,|f_m|,|f_m-f_n|)\,dx}\\
 \geq &\dfrac{((\int(f_n-f_m)^-\,dx)^p+(\int(f_n-f_m)^+\,dx)^p)^\frac{1}{p}}{\int|f_m|+|f_n|)\,dx}
\end{align*}
For any integer $N_0>0$, pick $n$ to be $N_0$ and we have that $d_N(f_{N_0},f_m) $ is given by
\begin{align*}
& \dfrac{((\int(f_{N_0}-f_m)^-\,dx)^p+(\int(f_{N_0}-f_m)^+\,dx)^p)^\frac{1}{p}}{\int|f_m|+|f_{N_0}|)\,dx}\\
 = &((\int(A_m-B_m)^-\,dx)^p
 +(\int(A_m-B_m)^+\,dx)^p)^\frac{1}{p}
\end{align*}
where 
\[
A_m = \frac{f_{N_0}}{\int|f_m|+|f_{N_0}|)\,dx}\textrm{,}\qquad B_m = \frac{f_m}{\int|f_m|+|f_{N_0}|)\,dx}
\]
and $\{A_m\}$ and $\{B_m\}$ are sequences of functions. Fixing $N_0$ and sending $n$ to $\infty$ we have that $\int|A_m|\,dx\rightarrow0$ and  $\int|B_m|\,dx\rightarrow1$ as $m\rightarrow\infty$. Therefore, we could choose $m>N_0$ such that $\int|A_m|\,dx<\nicefrac{1}{10}$ and  $\int|B_m|\,dx >\nicefrac{9}{10}$. Then between $\int B_m^-\,dx$ and $\int B_m^+\,dx$ there is at least one greater than $\nicefrac{9}{20}$. Without loss of generality, suppose $\int B_m^-\,dx>\nicefrac{9}{20}$, then
\begin{align*}
d_N(f_{N_0},f_m) 
& \geq (\textstyle\int(A_m-B_m)^+\,dx)^p)^\frac{1}{p}\\
& = \textstyle\int(B_m-A_m)^-\,dx\\
&\geq \textstyle\int(B_m)^-\,dx-\int(A_m)^-\,dx\\
&\geq \textstyle\int(B_m)^-\,dx-\int|A_m|\,dx \\
& >\frac{7}{20}\\
\end{align*}
\noindent With this we have shown that for any $N_0>0$, there exist $m,n\geq N_0$ such that $d_N(f_n,f_m)>\nicefrac{7}{20}$; i.e., $\{f_n\}$ is not a Cauchy sequence in $(L(\R),d_N)$. Thus, we have proved the claim.
\end{proof}
\begin{proof}[Proof of Theorem \ref{ThmOntologyU}]
To show that $d$ is a metric is analogous to the proof of Theorem \ref{ThmUnnormalizedSet}. Let $A, B, C$ be arbitrary consistent subgraphs of the ontology. Instead of $|A-B|$, we use $\sum_{v \in A-B} ia(v)$ and similarly for the other cardinalities. The proof follows line for line after these substitutions to the proof of Theorem \ref{ThmUnnormalizedSet}.

To prove $d_N$ in Theorem \ref{ThmOntologyN} is a metric, we follow a similar argument to the proof of Theorem \ref{ThmSet}. The analogues of $a,b,c,d,e,f,g$ here are
\begin{align*}
a & = \sum_{v \in A-(B \cup C)}ia(v),
& b & = \sum_{v \in B-(A \cup C)}ia(v), 
& c & =\sum_{v \in C-(A \cup B)}ia(v),
& d & = \sum_{v \in A\cap C - B}ia(v), \\
 e & = \sum_{v \in A\cap B - C}ia(v),
& f & = \sum_{v \in B\cap C - A}ia(v), 
& g & = \sum_{v \in A\cap B \cap C}ia(v). 
\end{align*}
With these substitutions, the proof is exactly the same as that of Theorem \ref{ThmSet}.



Invoking the Minkowski inequality, we obtain that
\begin{align*}
d_{N}(F,G) & \leq \frac{ru(F,G)+mi(F,G)}{\sum_{v\in F\cup G}ia(v)}
            \leq \frac{\sum_{v\in F\cup G}ia(v)}{\sum_{v\in F\cup G}ia(v)} =1\,.
\end{align*}
Since $d_N$ is nonnegative, we obtain that $d_N \in [0,1]$.
\end{proof}
%
%
\begin{proof}[Proof of Theorem \ref{Thm1}]
Since $d_N$ is non-negative, the equations $d_N(f,g)=d_N(g,f)$ and $d_N(f,g)=0$ hold if and only if $f=g$ almost everywhere, it suffices to show that $d$ satisfies the triangle inequality. Let $f$, $g$, and $h$ be in $L(\R)$. Then we have
%
\small
\begin{align*}
& \phantom{=.} D(f,g)+D(g,h)\\   
&=\left(\left(\int(f(x)-g(x))^{+}dx\right)^p + \left(\int(f(x)-g(x))^{-}dx\right)^p\right)^{\frac{1}{p}}\\
              &+\left(\left(\int(g(x)-h(x))^{+}dx\right)^p + \left(\int(g(x)-h(x))^{-}dx\right)^p\right)^{\frac{1}{p}}\\
              &\geq \left(\left(\int(f(x)-g(x))^{+}+(g(x)-h(x))^{+}dx\right)^p + \left(\int(f(x)-g(x))^{-}+(g(x)-h(x))^{-}dx\right)^p\right)^{\frac{1}{p}}\\
              &\geq \left(\left(\int(f(x)-g(x)+g(x)-h(x))^{+}dx\right)^p +\left(\int(f(x)-g(x)+g(x)-h(x))^{-}dx\right)^p\right)^{\frac{1}{p}}\\
              &\geq \left(\left(\int(f(x)-h(x))^{+}dx\right)^p +\left(\int(f(x)-h(x))^{-}dx\right)^p\right)^{\frac{1}{p}}\\
              & =D(f,h).
\end{align*}
\normalsize
Therefore, $d$ is a metric.
\end{proof}
\begin{proof}[Proof of Theorem \ref{ThmR}]
It is easy to check that $d_N$ is non-negative, $d_N(f,g)=d_N(g,f)$ and $d_N(f,g)=0$ if and only if $f=g$ almost everywhere. Therefore, it remains to be shown that the inequality $d_N(f,g)+d_N(g,h)\geq d_N(f,h)$ is satisfied. 

Let $f$, $g$, and $h$ be bounded functions in $L(\R)$. To begin, let us look at the trivial cases. Define $\M(f,g) = \int \max(|f|,|g|,|f-g|)\,dx $ and $\M^*(f,g,h) = \int\max(|f|,|g|,|h|,|f-g|,|g-h|,|f-h|)\,dx$.

\begin{enumerate}
  \item [a.]If $\int \max(|f|,|g|,|f-g|)\,dx=0$, then $f=g$ almost everywhere. Consequently $d_N(f,h)=d_N(g,h)$ and $d_N(f,g)=0$, so the inequality holds.
  \item [b.]If $\int\max(|f|,|h|,|f-h|)\,dx=0$, then $d_N(f,h)=0$, in which case the inequality is true due to the non-negativity of $d_N$.
  \item [c.]If $\int\max(|g|,|h|,|g-h|)\,dx=0$, then $g=h$ almost everywhere and $d_N(f,g)=d_N(f,h)$; thus, the triangle inequality still holds.
\end{enumerate}

\noindent Next we consider the case where none of the three denominators is zero. 
{\allowdisplaybreaks\begin{align*}\allowdisplaybreaks[4]
 &\phantom{=.} D_N(f,g)+D_N(g,h) \\
 &=\frac{\left(\left(\int(f-g)^{+}\,dx\right)^p + \left(\int(f-g)^{-}\,dx\right)^p\right)^{\frac{1}{p}}}{\M(f,g)}
             +\frac{\left(\left(\int(g-h)^{+}\,dx\right)^p + \left(\int(g-h)^{-}\,dx\right)^p\right)^{\frac{1}{p}}}{\M(g,h)}\\
              & \geq \left(\left(\frac{\int(f-g)^{+}\,dx}{\M(f,g)}+\frac{\int(g-h)^{+}\,dx}{\M(g,h)}\right)^p  + \left(\frac{\int(f-g)^{-}\,dx}{\M(f,g)}+\frac{\int(g-h)^{-}\,dx}{\M(g,h)}\right)^p\right)^{\frac{1}{p}}\\
              &\geq \left(\left(\frac{\int(f-g)^{+}+(g-h)^{+}\,dx}{\M^*(f,g,h) }\right)^p + \left(\frac{\int(f-g)^{-}+(g-h)^{-}\,dx}{\M^*(f,g,h) }\right)^p\right)^{\frac{1}{p}}\\
&=(I^p+J^p)^\frac{1}{p},
\end{align*}}
\noindent where 
\[I = \textstyle\frac{\int(f-g)^{+}+(g-h)^{+}\,dx}{\M^*(f,g,h) } \hspace{3pt}  \quad \text{and} \quad \hspace{3pt}  J = \frac{\int(f-g)^{-}+(g-h)^{-}\,dx}{\M^*(f,g,h) }.\]
 Let $\Gamma(f,g,h) = \int (\max(|g|,|f-g|,|g-h|)-\max(|f|,|h|,|f-h|))^{+}\,dx $. By subtracting $\Gamma(f,g,h)$ from the numerator and denominator of $I$ at the same time, it follows that
\begin{align*}
 I
& \geq  \frac{\int(f-g)^{+}+(g-h)^{+}\,dx-\Gamma(f,g,h)}{ \M^*(f,g,h)  \,-\Gamma(f,g,h)}\\
& = \frac{\int(f-g)^{+}+(g-h)^{+}\,dx-\Gamma(f,g,h)}{\M(f,h)}\\
& \geq \frac{\int(f-h)^{+}\,dx}{\M(f,h)}.
\end{align*}

\noindent The first of the above inequalities holds since we are subtracting a non-negative number no greater than the non-negative numerator from the top and bottom while the fraction stays in $[0,1]$. The equality holds due to Lemma \ref{Trivial} and the last inequality due to Lemma \ref{TheInequality}.
By analogy it can be shown that
\begin{align*}
 J\geq \frac{\int(f-h)^{-}\,dx}{\M(f,h)}.
\end{align*}
\noindent Thus, we have $d_N(f,g)+d_N(g,h)\geq d_N(f,h)$.

For general functions $f,g,h \in L(\R)$, we can exclude the set $(|f|=\infty)\cup(|g|=\infty)\cup(|h|=\infty)$, since the set where the functions are infinite is of measure zero. Thus, the theorem would proceed the same way since Lemma \ref{Lemma2} and Lemma \ref{TheInequality} still hold for $|f|,|g|,|h| <\infty$.

Now we show that $d_N \in [0,1]$. Since  $d_N(\cdot ,\cdot)$ is nonnegative, we only need to show that it is bounded by 1. 
Applying the Minkowski inequality we have that
\begin{align*}
  & \left(\displaystyle(\int(f-g)^{+}\,dx)^p+(\int(f-g)^{-}\,dx)^p\right)^\frac{1}{p} \\
  \leq & \int(f-g)^{+}\,dx +  \int(f-g)^{-}\,dx \\
  & =  \int|f-g|\,dx.
\end{align*}

Thus the numerator is bounded by the denominator and the fraction is no greater than 1.
With $\frac{0}{0}:= 0$ for $d_N(\cdot ,\cdot)$, we have shown that this metric is in $[0,1]$.
\end{proof}
\begin{proof}[Proof of Proposition \ref{equivalence}]
Since the proposition holds for $p=1$ and $p=\infty$, we only consider cases when $p>1$. Consider $\mathbf{x},\mathbf{y}\in\R^k$. First we show that $ d_{\textrm{M}}(\mathbf{x},\mathbf{y}) \leq d(\mathbf{x},\mathbf{y}) $. We have that
\begin{align*}
d_{\textrm{M}}(\mathbf{x},\mathbf{y})^p & = \sum|x_i - y_i|^p \\
& = \sum_{i:x_i \geq y_i} |x_i - y_i|^p+  \sum_{i:x_i<y_i} |x_i - y_i|^p \\
&\leq (\sum_{i:x_i \geq y_i} x_i - y_i)^p+  (\sum_{i:x_i<y_i} y_i - x_i)^p\\
& = d(\mathbf{x},\mathbf{y})^p
\end{align*}
The inequality holds since the bases of the exponents are positive. Now we show the other inequality holds, that is 
\begin{align*}
    d(\mathbf{x},\mathbf{y})^p &= (\sum_{i:x_i \geq y_i} x_i - y_i)^p+  (\sum_{i:x_i<y_i} y_i - x_i)^p\\
                               & \leq \{[k^{1/P^*}(\sum_{i:x_i \geq y_i}(x_i - y_i)^p)^{1/p}]^p  + [k^{1/P^*}(\sum_{i:x_i<y_i}(y_i - x_i)^p)^{1/p}]^p\} \\
                               & = k^{p/p^*}  \sum|x_i - y_i|^p\\
                               & = k^{p/p^*} d_{\textrm{M}}(\mathbf{x},\mathbf{y})^p,\\
\end{align*}
\noindent with $p^* = p/(p-1)$.
The inequality holds since $|\sum_{i=1}^m a_i| = |(1,\ldots,1)\cdot\mathbf{a}|\leq \|(1,\ldots,1)\|_{p^*}\|\mathbf{a}\|_p$, where $\|\cdot\|_p$ represents the $L^p$ norm, thanks to H\"{o}lder's inequality.
\end{proof}
\begin{proof}[Proof of Theorem \ref{completeness}]
By definition, a metric space $(X,d)$ is complete if all Cauchy sequences in $X$ converge in $X$; that is, if the limit point of every Cauchy sequence in $X$ remains in $X$. 
Let us first consider a Cauchy sequence in $(L(\R),d)$, where for a given $\epsilon >0$, there exists some $N>0$ such that $d(f_n,f_m)<\epsilon$ for all $n,m \geq N$; i.e., $(\int(f_n-f_m)^-\,dx)^p+(\int(f_n-f_m)^+\,dx)^p < \epsilon^p$. It follows that $\int(f_n-f_m)^-\,dx<\epsilon$ and $\int(f_n-f_m)^+\,dx<\epsilon$ and thus $\int|f_n-f_m|\,dx<2\epsilon$. Therefore, $\{f_n\}$ is a Cauchy sequence in $L^1$ space, where the metric in $L^1$ is $d(f,g) = \int|f-g|\,dx$ for integrable functions and thus $f_n$ converges to a function $f$ in $L^1$ by the completeness of $L^1$ space.

Now we look at a Cauchy sequence $\{f_n\}$ in $(L(\R),d_N)$.
By Lemma \ref{bounded} we have that $\int|f_n|\,dx \leq M$ for all $n$ for some positive constant $M$. It follows that for any given $\epsilon >0$, there exists some integer $N_0>0$ such that $d_N(f_n,f_m)<\epsilon$ for all $n,m \geq N_0$, or in other words, $d(f_n,f_m)<2M\epsilon$. Therefore, $\{f_n\}$ is Cauchy in $(L(\R),d)$ and by previous results we know that $\{f_n\}$ has a limit in $L^1$ and therefore $(L(\R),d_N)$ is complete.
\end{proof}

\section{Real-valued and text data} \label{dataset}

Real-valued data sets were downloaded from UCI machine learning repository. The three numbers in the parenthesis after each data set listed below correspond to (number of classes, number of instances, number of features): \textit{airfoil}~(2, 1503, 5), \textit{banknote}~(2, 1372, 4), \textit{cardiotocography}~(10, 2126, 21), \textit{concrete}~(2, 1030, 8), \textit{eyestate}~(2, 14980, 14), \textit{faults}~(7, 1941, 27), \textit{fertility}~(2, 100, 9), \textit{gas}~(6, 13910, 128), \textit{glass}~(6, 214, 10), \textit{housing}~(2, 506, 13), \textit{ionosphere}~(2, 351, 34), \textit{iris}~(3, 150, 4), \textit{landsat}~(6, 6435, 36), \textit{leaf}~(30, 340, 14), \textit{pageblock}~(5, 5473, 10), \textit{pendigits}~(10, 10992, 16), \textit{pima}~(2, 768, 8), \textit{retinopathy}~(2, 1151, 19), \textit{seeds}~(3, 210, 7), \textit{segment}~(7, 2310, 19), \textit{shuttle}~(7, 58000, 9), \textit{sonar}~(2, 208, 60), \textit{spambase}~(2, 4601, 57), \textit{transfusion}~(2, 748, 4), \textit{vertebral}~(2, 310, 6), \textit{waveform}~(3, 5000, 40), \textit{wdbc}~(2, 569, 30), \textit{wilt}~(2, 4839, 5), \textit{winequality}~(7, 6497, 11), \textit{yeast}~(10, 1484, 8). \textit{Concrete} and \textit{housing} were converted to binary classification tasks based on the target mean.


Among the five text document data sets, \textit{lifesci5}~(5, 9000, 52757) was extracted from a collection of abstracts from 5 life sciences journals: \textit{Scientific Reports}, \textit{Oncotarget}, \textit{Proceedings of the National Academy of Sciences of the United States of America}, \textit{ACS Applied Materials and Interfaces} and \textit{PloS One}, obtained from the Europe PMC life science database. As for \textit{reuters}~(4, 2065, 8943), documents were collected by selecting these 4 topics exclusively: ``interest'', ``trade'', ``grain'' and ``crude''. Each of these data sets was pre-processed by stop-word removal and then Porter stemming. The remaining pre-processed data sets were downsampled \textit{Cade12}~(12, 4800, 57312), \textit{webkb}~(4, 2803, 7288) downloaded from \cite{Cachopo2007}, \textit{20newsgroups}~(20, 4000, 130107) from scikit-learn python library, \textit{moviereview}~(2, 2000, 39659) from \cite{Pang2004}, \textit{CNAE9}~(9, 1080, 856), \textit{TTC3600}~(6, 3600, 5692) from the UCI Machine Learning Repository, \textit{sdm06}~(27, 930, 99899) from \cite{Dalkilic2006} and \textit{bbc}~(5, 2225, 9635) from \cite{greene06icml}. All text document data sets used tf-idf as features.

\section{Protein function data}
\label{protein_data}
Protein function data were downloaded from the Swiss-Prot database (July 2015). 
In particular, we collected protein functions for the following model organisms where sufficient annotations are available: \textit{Homo sapiens}, \textit{Mus musculus}, \textit{Arabidopsis thaliana}, \textit{Saccharomyces cerevisiae}, and \textit{Escherichia coli}. Only those annotations with (experimental) evidence codes EXP, IDA, IMP, IPI, IGI, IEP, TAS, and IC were considered. Table~1 summarizes the data sets: here the genome size corresponds to the total number of proteins available for each species in Swiss-Prot. The following three columns show the number of proteins that are annotated in the three domains of Gene Ontology accordingly.

%

\begin{table}[ht]
\centering
\small
\begin{tabular}{|c|r|r|r|r|}
\hline 
Organism & Genome size & MFO & BPO & CCO\tabularnewline
\hline 
\hline 
\textit{H. sapiens} & 20,193 & 11,979 & 11,398 & 12,691\tabularnewline
\hline 
\textit{M. musculus} & 16,733 &  6,728 &  7,702 &  7,322\tabularnewline
\hline 
\textit{A. thaliana} & 14,305 &  4,266 &  5,749 &  5,950 \tabularnewline
\hline 
\textit{S. cerevisiae} & 6,720 & 4,051 &  4,676 &  4,102\tabularnewline
\hline 
\textit{E. coli} & 4,433 & 2,272 &  2,331 &  2,119\tabularnewline
\hline 
\end{tabular}
\label{ontology_data}
\caption{Data set sizes for the five organisms used in this work. The genome size refers to the number of protein sequences available in Swiss-Prot for each species.}
\end{table}

The conditional probability tables were estimated using the maximum likelihood approach from the entire set of functionally annotated proteins in Swiss-Prot. This set included $72977$ proteins from $1576$ species with MFO terms, $92874$ proteins from $1503$ species with BPO terms, and $89693$ proteins from  $862$ species with CCO terms.

\section{Calculation of protein sequence similarity}
\label{protein_seq}
The sequence similarity of two protein sequences was measured as the ratio of the number of identical characters in the alignment and the length of the longer protein sequence. Sequence alignment was obtained using a Needleman-Wunsch algorithm with BLOSUM62 similarity matrix, gap opening penalty of 11 and gap extension penalty of 1.

\section{Phylogenetic clustering}
\label{phylogeny}
The functional phylogenetic trees with respect to a group of organisms are generated using single-linkage hierarchical clustering \cite{Tan2006}. This algorithm starts by considering every data point (species) to be a cluster of one element and in each step merges the two closest clusters. The algorithm continues until all original data points belong to the same cluster. The distance between species is based on pairwise distances between functionally annotated proteins as described below. For simplicity, we use normalized semantic distance from Eq.~\ref{eq:4} with $p=1$ in all experiments.

Without loss of generality, we illustrate the species distance calculation by showing how to compute the distances between \emph{A.~thaliana} (A) and all other organisms using protein function data only. An important challenge in this task arises from unequal genome sizes as well as unequal fractions of experimentally annotated proteins in each species (Table~1), making most distance calculation techniques unsuitable for this task. We therefore carry out sampling to compare species using a fixed yet sufficiently large set of $N$ proteins from each species. The algorithm first samples (with replacement) $N=1000$ proteins from each species. It then counts the number of times the proteins from \textit{E.~coli} (E), \textit{H.~sapiens} (H), \textit{M.~musculus} (M) and \textit{S.~cerevisiae} (Y) are functionally most similar to proteins in \emph{A.~thaliana}, with ties resolved uniformly randomly. These counts are used to calculate the directional distances between \emph{A.~thaliana} and the remaining four species. The procedure is repeated $B=1000$ times with different bootstrap samples to stabilize the results. The details of the algorithm are shown in Algorithm 1.

\section{Additional results}
We carried out several additional experiments in order to evaluate the proposed distance functions. These experiments investigated the influence of data normalization (z-score, min-max, unit) and performance assessment criteria. These results are provided in a table at

\href{www.cs.indiana.edu/~predrag/table.xlsx}{\texttt{www.cs.indiana.edu/{\raise.17ex\hbox{$\bm{\scriptstyle\sim}$}}predrag/table.xlsx}}.

\noindent Although the effect of data normalization deserves attention, all conclusions reached in the main portion of the paper remain unchanged.

\begin{figure}[ht!]
\label{alg:phylogeny}
\begin{center}
\includegraphics[width=0.75\linewidth,trim=4cm 13.5cm 4cm 3.5cm,clip
]{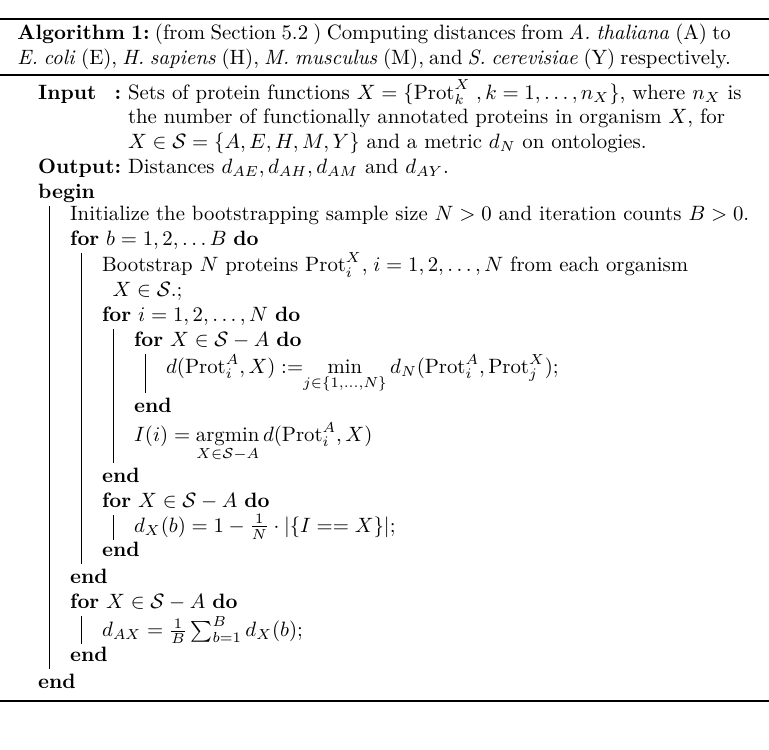}
\end{center}
\end{figure}

\end{document}



\title{Appendix}
\date{}

\maketitle


\fancyfoot[R]{\footnotesize{\textbf{Copyright \textcopyright\ 2018 by SIAM\\
Unauthorized reproduction of this article is prohibited}}}





\maketitle





\begin{center}
\textbf{\Large{Appendix}}
\end{center}


\section{Lemmas and Proofs}
\label{lemma_proof}
%
\begin{proof}[Proof of Theorem \ref{ThmUnnormalizedSet}]
The only property of a metric not obviously satisfied by $d$ is the triangle inequality. Given arbitrary sets $A, B, C \in X$, we have
\begin{align*}
& \phantom{=.}d(A,B) + d(B,C) \\ & = (|A-B|^p+|B-A|^p)^\frac{1}{p}+(|B-C|^p+|C-B|^p)^\frac{1}{p}\\
                    &\geq ((|A-B|+|B-C|)^p+(|B-A|+|C-B|)^p)^\frac{1}{p}  \\
                    &\geq (|A-C|^p +|C-A|^p)^\frac{1}{p} \\ 
              	   & =d(A,C). \\
\end{align*}
The first inequality holds due to Minkowski inequality. 
The second inequality can be deduced from triangle inequality of Manhattan distance on binary vectors.
\end{proof}
\begin{proof}[Proof of Theorem \ref{ThmSet}]
As in the unnormalized case we only prove the triangle inequality. Let $A, B, C \in X$ be arbitrary sets. If at least one of $|A\cup B|=0$ and $|B\cup C| = 0$ holds, then the triangle inequality holds. So we only consider the cases when $|A\cup B|\neq 0$ and $|B\cup C| \neq 0$.
%
\noindent Let the cardinality of different sets shown in Figure \ref{fig:venn} be denoted by $a$, $b$, $\ldots$, $g$. Let $\tau = a + b + \dots +g $ and $\hp(x,y) = (x^p+y^p)^{\nicefrac{1}{p}}$. 
\begin{align*}
 & \phantom{=.}d_N(A,B) + d_N(B,C)  \\
 & = \frac{\hp(a+d,b+f)}{\tau-c}+\frac{\hp(b+e,c+d)}{\tau-a}\\
                    &\geq\frac{\hp(a+d,b+f)}{\tau}+\frac{\hp(b+e,c+d)}{\tau}\\
                    &\geq \frac{\hp(a+d+b+e,b+f+c+d)}{\tau}\\
                    &=   \hp\left(\frac{a+d+b+e}{\tau},\frac{b+f+c+d}{\tau}\right) \\
                     &\geq   \hp\left(\frac{a+d+b+e-b}{\tau-b},\frac{b+f+c+d-b}{\tau-b}\right) \\
                    &\geq\frac{\hp(a+d+e,f+c+d)}{\tau-b}\\
                    &\geq\frac{\hp(a+e,f+c)}{\tau-b}\\
              	   &=d_N(A,C).\\
\end{align*}
\begin{figure}[ht]
\centering
\includegraphics[scale=.9]{VennDiagram.pdf}
\protect\caption{The Venn diagram and cardinality related to sets $A$, $B$ and $C$. \label{fig:venn}}
\end{figure}
The second inequality holds due to Minkowski inequality. The third inequality is true since we subtracted the same nonnegative number $b$ from both the numerator and denominator of the fraction with the fraction itself remaining in $[0,1]$ after the subtraction (the numerator is nonnegative before and after the subtraction). Hence, $d_N$ is a metric. 
It follows that $d_N$ is bounded in $[0,1]$ via the Minkowski inequality.
%
\end{proof}
%
\begin{proof}[Proof of Theorem \ref{ThmVectorU} and Theorem \ref{ThmVectorN}]
The metrics $d$ and $d_N$ defined in Eqs.~\ref{eq:5}-\ref{eq:6} can be reduced to be applied on vectors in $\R^k$ by rewriting $\mathbf{x} =(x_1,x_2,\ldots,x_k)$ as a map $f(t):=\sum x_i\mathbf{1}_{i-1 <t\leq i}$, where $\mathbf{1}_{(\cdot)}$ is an indicator function. Since $d$ and $d_N$ are metrics, proved in Theorem \ref{Thm1} and Theorem \ref{ThmR}, we obtain that $d$ and $d_N$ on $\R^k$ are also metrics.
\end{proof}
%
\begin{lem}
For any two real functions $f$ and $g$, we have $(f+g)^{+}\leq f^{+}+g^{+}$, and, $(f+g)^{-}\leq f^{-}+g^{-}$.
\label{Lemma1}
\end{lem}
\begin{proof}
We will prove $(f+g)^{+}\leq f^{+}+g^{+}$ first.
 Since
\[f^{+}+g^{+} = \max(f,0)+\max(g,0) \geq f+g,\]
and
\[f^{+}+g^{+} = \max(f,0)+\max(g,0) \geq 0,\]
we have that $ f^{+}+g^{+} \geq \max(f+g,0)=(f+g)^+$.

Notice that $f^{-} = (-f)^{+}$. To prove $(f+g)^{-}\leq f^{-}+g^{-}$, we have $(f+g)^{-} = (-f-g)^{+} \leq(-f)^{+}+(-g)^{+}=f^{-} + g^{-}$.
\end{proof}

\begin{lem}
For any real functions $f$, $g$ we have
\[
f^+ + g^+ = \min(|f|,|g|)\boldsymbol{1}_{\{ fg <0\}} +(f+g)^+.\]
\label{Lemma2}
\end{lem}
\begin{proof}
For $fg \geq 0$, it is not hard to see that $ f^+ + g^+ = (f+g)^+$.
When $ fg < 0$, without loss of generality, suppose that $f < 0$ and $g >0$. Then we have
\begin{align*}
 & \textstyle f^+ + g^+ -(f+g)^+  = g - (f+g)^+ \\
 & = \textstyle\begin{cases} g, & \text{if 
  \,} |f| > |g| \\ -f, & \text{if \,}|f| \leq |g| \end{cases} 
  = \min(|f|,|g|). 
\end{align*} 
\end{proof}

\begin{lem}
For any real functions $f,g$ and $h$, we have
\begin{align*}
       & \textstyle(f - g)^+ + (g - h)^+
   - \{\max(|g|,|f-g|,|g-h|)- \max(|f|,|h|,|f-h|)\}^+ 
  \geq  (f - h)^+.
\end{align*}
\label{TheInequality}
\end{lem}
%
\begin{proof}
By Lemma \ref{Lemma2}, it is equivalent to show  
\begin{align*}
      \min(|f - g|, |g - h|) \boldsymbol{1}_{\{(f-g)(g-h)<0\}} 
 \geq \{\max(|g|,|f-g|,|g-h|) - \max(|f|,|h|,|f-h|)\}^+.
\end{align*}
%
Since $ \min(|f - g|, |g - h|) \boldsymbol{1}_{\{(f-g)(g-h)<0\}} \geq 0 $, it suffices to show that it is no less than $\max(|g|,|f-g|,|g-h|) - \max(|f|,|h|,|f-h|)$.

Consider the case $(f-g)(g-h) \geq 0$ first. We have $f \geq g \geq h$ or $f \leq g \leq h$. $g$ is located between $f$ and $h$, therefore $|g| \leq \max{(|f|, |h|)}$ and $\max(|f-g|,|g-h|)\leq |f-h|$.
Then we have 
\begin{align*}
|g|-\max(|f|,|h|,|f-h|) \leq |g| - \max(|f|,|h|) \leq 0,
\end{align*}
and
\begin{align*}
\max(|f-g|,|g-h|)-\max(|f|,|h|,|f-h|) 
\leq  \max(|f-g|,|g-h|)-|f-h| \leq 0.
\end{align*}
This shows that 
\begin{align*}
 \max(|g|.|f-g|,|g-h|)-\max(|f|,|h|,|f-h|)
 \leq  \; 0 =  \min(|f - g|, |g - h|) \boldsymbol{1}_{\{(f-g)(g-h)<0\}}
\end{align*}
%
When $(f-g)(g-h) \leq 0$, we have the following two cases 
%
\begin{enumerate}
\item $\min(|f - g|, |g - h|)=|f-g|$
\item $\min(|f - g|, |g - h|)=|g-h|$
\end{enumerate}
%
For case 1, we want to show that $ |f - g|\geq \max(|g|,|g-h|) - \max(|f|,|h|,|f-h|)$. This is true since
\[
|f-g| \geq |g| - |f| \geq |g| -  \max(|f|,|h|,|f-h|),
\]
and
 \[
|f-g| \geq |g-h| - |f-h|\geq |g-h| -  \max(|f|,|h|,|f-h|).
\]
Combining those two inequalities we get $ |f - g|\geq \max(|g|,|g-h|) - \max(|f|,|h|,|f-h|)$.
%
For case 2, we want to show  $ |g - h|\geq \max(|g|,|f-g|) - \max(|f|,|h|,|f-h|)$. By the same analogy, since
\[
|g-h| \geq |g| - |h| \geq |g| -  \max(|f|,|h|,|f-h|),
\]
and
\[
|g-h| \geq |f-g| - |f-h|\geq |f-g| -  \max(|f|,|h|,|f-h|),
\]
we have $ |g - h|\geq \max(|g|,|g-h|) - \max(|f|,|h|,|f-h|)$.
%
Therefore this inequality still holds for $(f-g)(g-h) \leq 0$.
\end{proof}
%
\begin{lem}
\label{Trivial}
For any two real functions $f$ and $g$, we have
\[
   \max(f,g) = f + (g - f)^+.
\]
\end{lem}
%
%
\begin{lem}
    Any Cauchy sequence $\{f_n\}$ in $(L(\R),d_N)$  is bounded in $L^1$; i.e., $\int|f_n|\,dx < M$ for some constant $M>0$.
\label{bounded}    
\end{lem}
%
\begin{proof}
We instead prove the contrapositive version of the above statement. Suppose $\{f_n\}$ is a sequence in $(L(\R),d_N)$ that is unbounded in $L^1$, or equivalently $\int|f_n|\,dx = \infty$ as $n\rightarrow\infty$. Thus, we have 
\begin{align*}
&d_N(f_n,f_m) \\
=& \dfrac{((\int(f_n-f_m)^-\,dx)^p+(\int(f_n-f_m)^+\,dx)^p)^\frac{1}{p}}{\int \max(|f_n|,|f_m|,|f_m-f_n|)\,dx}\\
 \geq &\dfrac{((\int(f_n-f_m)^-\,dx)^p+(\int(f_n-f_m)^+\,dx)^p)^\frac{1}{p}}{\int|f_m|+|f_n|)\,dx}
\end{align*}
For any integer $N_0>0$, pick $n$ to be $N_0$ and we have that $d_N(f_{N_0},f_m) $ is given by
\begin{align*}
& \dfrac{((\int(f_{N_0}-f_m)^-\,dx)^p+(\int(f_{N_0}-f_m)^+\,dx)^p)^\frac{1}{p}}{\int|f_m|+|f_{N_0}|)\,dx}\\
 = &((\int(A_m-B_m)^-\,dx)^p
 +(\int(A_m-B_m)^+\,dx)^p)^\frac{1}{p}
\end{align*}
where 
\[
A_m = \frac{f_{N_0}}{\int|f_m|+|f_{N_0}|)\,dx}\textrm{,}\qquad B_m = \frac{f_m}{\int|f_m|+|f_{N_0}|)\,dx}
\]
and $\{A_m\}$ and $\{B_m\}$ are sequences of functions. Fixing $N_0$ and sending $n$ to $\infty$ we have that $\int|A_m|\,dx\rightarrow0$ and  $\int|B_m|\,dx\rightarrow1$ as $m\rightarrow\infty$. Therefore, we could choose $m>N_0$ such that $\int|A_m|\,dx<\nicefrac{1}{10}$ and  $\int|B_m|\,dx >\nicefrac{9}{10}$. Then between $\int B_m^-\,dx$ and $\int B_m^+\,dx$ there is at least one greater than $\nicefrac{9}{20}$. Without loss of generality, suppose $\int B_m^-\,dx>\nicefrac{9}{20}$, then
\begin{align*}
d_N(f_{N_0},f_m) 
& \geq (\textstyle\int(A_m-B_m)^+\,dx)^p)^\frac{1}{p}\\
& = \textstyle\int(B_m-A_m)^-\,dx\\
&\geq \textstyle\int(B_m)^-\,dx-\int(A_m)^-\,dx\\
&\geq \textstyle\int(B_m)^-\,dx-\int|A_m|\,dx \\
& >\frac{7}{20}\\
\end{align*}
\noindent With this we have shown that for any $N_0>0$, there exist $m,n\geq N_0$ such that $d_N(f_n,f_m)>\nicefrac{7}{20}$; i.e., $\{f_n\}$ is not a Cauchy sequence in $(L(\R),d_N)$. Thus, we have proved the claim.
\end{proof}
\begin{proof}[Proof of Theorem \ref{ThmOntologyU}]
To show that $d$ is a metric is analogous to the proof of Theorem \ref{ThmUnnormalizedSet}. Let $A, B, C$ be arbitrary consistent subgraphs of the ontology. Instead of $|A-B|$, we use $\sum_{v \in A-B} ia(v)$ and similarly for the other cardinalities. The proof follows line for line after these substitutions to the proof of Theorem \ref{ThmUnnormalizedSet}.

To prove $d_N$ in Theorem \ref{ThmOntologyN} is a metric, we follow a similar argument to the proof of Theorem \ref{ThmSet}. The analogues of $a,b,c,d,e,f,g$ here are
\begin{align*}
a & = \sum_{v \in A-(B \cup C)}ia(v),
& b & = \sum_{v \in B-(A \cup C)}ia(v), 
& c & =\sum_{v \in C-(A \cup B)}ia(v),
& d & = \sum_{v \in A\cap C - B}ia(v), \\
 e & = \sum_{v \in A\cap B - C}ia(v),
& f & = \sum_{v \in B\cap C - A}ia(v), 
& g & = \sum_{v \in A\cap B \cap C}ia(v). 
\end{align*}
With these substitutions, the proof is exactly the same as that of Theorem \ref{ThmSet}.



Invoking the Minkowski inequality, we obtain that
\begin{align*}
d_{N}(F,G) & \leq \frac{ru(F,G)+mi(F,G)}{\sum_{v\in F\cup G}ia(v)}
            \leq \frac{\sum_{v\in F\cup G}ia(v)}{\sum_{v\in F\cup G}ia(v)} =1\,.
\end{align*}
Since $d_N$ is nonnegative, we obtain that $d_N \in [0,1]$.
%
\end{proof}
%
%
\begin{proof}[Proof of Theorem \ref{Thm1}]
Since $d_N$ is non-negative, the equations $d_N(f,g)=d_N(g,f)$ and $d_N(f,g)=0$ hold if and only if $f=g$ almost everywhere, it suffices to show that $d$ satisfies the triangle inequality. Let $f$, $g$, and $h$ be in $L(\R)$. Then we have
%
\small
\begin{align*}
& \phantom{=.} D(f,g)+D(g,h)\\   
&=\left(\left(\int(f(x)-g(x))^{+}dx\right)^p + \left(\int(f(x)-g(x))^{-}dx\right)^p\right)^{\frac{1}{p}}\\
              &+\left(\left(\int(g(x)-h(x))^{+}dx\right)^p + \left(\int(g(x)-h(x))^{-}dx\right)^p\right)^{\frac{1}{p}}\\
              &\geq \left(\left(\int(f(x)-g(x))^{+}+(g(x)-h(x))^{+}dx\right)^p + \left(\int(f(x)-g(x))^{-}+(g(x)-h(x))^{-}dx\right)^p\right)^{\frac{1}{p}}\\
              &\geq \left(\left(\int(f(x)-g(x)+g(x)-h(x))^{+}dx\right)^p +\left(\int(f(x)-g(x)+g(x)-h(x))^{-}dx\right)^p\right)^{\frac{1}{p}}\\
              &\geq \left(\left(\int(f(x)-h(x))^{+}dx\right)^p +\left(\int(f(x)-h(x))^{-}dx\right)^p\right)^{\frac{1}{p}}\\
              & =D(f,h).
\end{align*}
\normalsize
Therefore, $d$ is a metric.
\end{proof}
\begin{proof}[Proof of Theorem \ref{ThmR}]
It is easy to check that $d_N$ is non-negative, $d_N(f,g)=d_N(g,f)$ and $d_N(f,g)=0$ if and only if $f=g$ almost everywhere. Therefore, it remains to be shown that the inequality $d_N(f,g)+d_N(g,h)\geq d_N(f,h)$ is satisfied. 

Let $f$, $g$, and $h$ be bounded functions in $L(\R)$. To begin, let us look at the trivial cases. Define $\M(f,g) = \int \max(|f|,|g|,|f-g|)\,dx $ and $\M^*(f,g,h) = \int\max(|f|,|g|,|h|,|f-g|,|g-h|,|f-h|)\,dx$.

\begin{enumerate}
  \item [a.]If $\int \max(|f|,|g|,|f-g|)\,dx=0$, then $f=g$ almost everywhere. Consequently $d_N(f,h)=d_N(g,h)$ and $d_N(f,g)=0$, so the inequality holds.
  \item [b.]If $\int\max(|f|,|h|,|f-h|)\,dx=0$, then $d_N(f,h)=0$, in which case the inequality is true due to the non-negativity of $d_N$.
  \item [c.]If $\int\max(|g|,|h|,|g-h|)\,dx=0$, then $g=h$ almost everywhere and $d_N(f,g)=d_N(f,h)$; thus, the triangle inequality still holds.
\end{enumerate}

\noindent Next we consider the case where none of the three denominators is zero. 
{\allowdisplaybreaks\begin{align*}\allowdisplaybreaks[4]
 &\phantom{=.} D_N(f,g)+D_N(g,h) \\
 &=\frac{\left(\left(\int(f-g)^{+}\,dx\right)^p + \left(\int(f-g)^{-}\,dx\right)^p\right)^{\frac{1}{p}}}{\M(f,g)}
             +\frac{\left(\left(\int(g-h)^{+}\,dx\right)^p + \left(\int(g-h)^{-}\,dx\right)^p\right)^{\frac{1}{p}}}{\M(g,h)}\\
              & \geq \left(\left(\frac{\int(f-g)^{+}\,dx}{\M(f,g)}+\frac{\int(g-h)^{+}\,dx}{\M(g,h)}\right)^p  + \left(\frac{\int(f-g)^{-}\,dx}{\M(f,g)}+\frac{\int(g-h)^{-}\,dx}{\M(g,h)}\right)^p\right)^{\frac{1}{p}}\\
              &\geq \left(\left(\frac{\int(f-g)^{+}+(g-h)^{+}\,dx}{\M^*(f,g,h) }\right)^p + \left(\frac{\int(f-g)^{-}+(g-h)^{-}\,dx}{\M^*(f,g,h) }\right)^p\right)^{\frac{1}{p}}\\
&=(I^p+J^p)^\frac{1}{p},
\end{align*}}
\noindent where 
\[I = \textstyle\frac{\int(f-g)^{+}+(g-h)^{+}\,dx}{\M^*(f,g,h) } \hspace{3pt}  \quad \text{and} \quad \hspace{3pt}  J = \frac{\int(f-g)^{-}+(g-h)^{-}\,dx}{\M^*(f,g,h) }.\]
 Let $\Gamma(f,g,h) = \int (\max(|g|,|f-g|,|g-h|)-\max(|f|,|h|,|f-h|))^{+}\,dx $. By subtracting $\Gamma(f,g,h)$ from the numerator and denominator of $I$ at the same time, it follows that
\begin{align*}
 I
& \geq  \frac{\int(f-g)^{+}+(g-h)^{+}\,dx-\Gamma(f,g,h)}{ \M^*(f,g,h)  \,-\Gamma(f,g,h)}\\
& = \frac{\int(f-g)^{+}+(g-h)^{+}\,dx-\Gamma(f,g,h)}{\M(f,h)}\\
& \geq \frac{\int(f-h)^{+}\,dx}{\M(f,h)}.
\end{align*}

\noindent The first of the above inequalities holds since we are subtracting a non-negative number no greater than the non-negative numerator from the top and bottom while the fraction stays in $[0,1]$. The equality holds due to Lemma \ref{Trivial} and the last inequality due to Lemma \ref{TheInequality}.
By analogy it can be shown that
\begin{align*}
 J\geq \frac{\int(f-h)^{-}\,dx}{\M(f,h)}.
\end{align*}
\noindent Thus, we have $d_N(f,g)+d_N(g,h)\geq d_N(f,h)$.

For general functions $f,g,h \in L(\R)$, we can exclude the set $(|f|=\infty)\cup(|g|=\infty)\cup(|h|=\infty)$, since the set where the functions are infinite is of measure zero. Thus, the theorem would proceed the same way since Lemma \ref{Lemma2} and Lemma \ref{TheInequality} still hold for $|f|,|g|,|h| <\infty$.

Now we show that $d_N \in [0,1]$. Since  $d_N(\cdot ,\cdot)$ is nonnegative, we only need to show that it is bounded by 1. 
Applying the Minkowski inequality we have that
\begin{align*}
  & \left(\displaystyle(\int(f-g)^{+}\,dx)^p+(\int(f-g)^{-}\,dx)^p\right)^\frac{1}{p} \\
  \leq & \int(f-g)^{+}\,dx +  \int(f-g)^{-}\,dx \\
  & =  \int|f-g|\,dx.
\end{align*}
%

Thus the numerator is bounded by the denominator and the fraction is no greater than 1.
With $\frac{0}{0}:= 0$ for $d_N(\cdot ,\cdot)$, we have shown that this metric is in $[0,1]$.
\end{proof}
\begin{proof}[Proof of Proposition \ref{equivalence}]
Since the proposition holds for $p=1$ and $p=\infty$, we only consider cases when $p>1$. Consider $\mathbf{x},\mathbf{y}\in\R^k$. First we show that $ d_{\textrm{M}}(\mathbf{x},\mathbf{y}) \leq d(\mathbf{x},\mathbf{y}) $. We have that
\begin{align*}
d_{\textrm{M}}(\mathbf{x},\mathbf{y})^p & = \sum|x_i - y_i|^p \\
& = \sum_{i:x_i \geq y_i} |x_i - y_i|^p+  \sum_{i:x_i<y_i} |x_i - y_i|^p \\
&\leq (\sum_{i:x_i \geq y_i} x_i - y_i)^p+  (\sum_{i:x_i<y_i} y_i - x_i)^p\\
& = d(\mathbf{x},\mathbf{y})^p
\end{align*}
The inequality holds since the bases of the exponents are positive. Now we show the other inequality holds, that is 
\begin{align*}
    d(\mathbf{x},\mathbf{y})^p &= (\sum_{i:x_i \geq y_i} x_i - y_i)^p+  (\sum_{i:x_i<y_i} y_i - x_i)^p\\
                               & \leq \{[k^{1/P^*}(\sum_{i:x_i \geq y_i}(x_i - y_i)^p)^{1/p}]^p  + [k^{1/P^*}(\sum_{i:x_i<y_i}(y_i - x_i)^p)^{1/p}]^p\} \\
                               & = k^{p/p^*}  \sum|x_i - y_i|^p\\
                               & = k^{p/p^*} d_{\textrm{M}}(\mathbf{x},\mathbf{y})^p,\\
\end{align*}
\noindent with $p^* = p/(p-1)$.
The inequality holds since $|\sum_{i=1}^m a_i| = |(1,\ldots,1)\cdot\mathbf{a}|\leq \|(1,\ldots,1)\|_{p^*}\|\mathbf{a}\|_p$, where $\|\cdot\|_p$ represents the $L^p$ norm, thanks to H\"{o}lder's inequality.
\end{proof}
\begin{proof}[Proof of Theorem \ref{completeness}]
By definition, a metric space $(X,d)$ is complete if all Cauchy sequences in $X$ converge in $X$; that is, if the limit point of every Cauchy sequence in $X$ remains in $X$. 
Let us first consider a Cauchy sequence in $(L(\R),d)$, where for a given $\epsilon >0$, there exists some $N>0$ such that $d(f_n,f_m)<\epsilon$ for all $n,m \geq N$; i.e., $(\int(f_n-f_m)^-\,dx)^p+(\int(f_n-f_m)^+\,dx)^p < \epsilon^p$. It follows that $\int(f_n-f_m)^-\,dx<\epsilon$ and $\int(f_n-f_m)^+\,dx<\epsilon$ and thus $\int|f_n-f_m|\,dx<2\epsilon$. Therefore, $\{f_n\}$ is a Cauchy sequence in $L^1$ space, where the metric in $L^1$ is $d(f,g) = \int|f-g|\,dx$ for integrable functions and thus $f_n$ converges to a function $f$ in $L^1$ by the completeness of $L^1$ space.

Now we look at a Cauchy sequence $\{f_n\}$ in $(L(\R),d_N)$.
By Lemma \ref{bounded} we have that $\int|f_n|\,dx \leq M$ for all $n$ for some positive constant $M$. It follows that for any given $\epsilon >0$, there exists some integer $N_0>0$ such that $d_N(f_n,f_m)<\epsilon$ for all $n,m \geq N_0$, or in other words, $d(f_n,f_m)<2M\epsilon$. Therefore, $\{f_n\}$ is Cauchy in $(L(\R),d)$ and by previous results we know that $\{f_n\}$ has a limit in $L^1$ and therefore $(L(\R),d_N)$ is complete.
\end{proof}



\section{Real-valued and text data} \label{dataset}

Real-valued data sets were downloaded from UCI machine learning repository. The three numbers in the parenthesis after each data set listed below correspond to (number of classes, number of instances, number of features): \textit{airfoil}~(2, 1503, 5), \textit{banknote}~(2, 1372, 4), \textit{cardiotocography}~(10, 2126, 21), \textit{concrete}~(2, 1030, 8), \textit{eyestate}~(2, 14980, 14), \textit{faults}~(7, 1941, 27), \textit{fertility}~(2, 100, 9), \textit{gas}~(6, 13910, 128), \textit{glass}~(6, 214, 10), \textit{housing}~(2, 506, 13), \textit{ionosphere}~(2, 351, 34), \textit{iris}~(3, 150, 4), \textit{landsat}~(6, 6435, 36), \textit{leaf}~(30, 340, 14), \textit{pageblock}~(5, 5473, 10), \textit{pendigits}~(10, 10992, 16), \textit{pima}~(2, 768, 8), \textit{retinopathy}~(2, 1151, 19), \textit{seeds}~(3, 210, 7), \textit{segment}~(7, 2310, 19), \textit{shuttle}~(7, 58000, 9), \textit{sonar}~(2, 208, 60), \textit{spambase}~(2, 4601, 57), \textit{transfusion}~(2, 748, 4), \textit{vertebral}~(2, 310, 6), \textit{waveform}~(3, 5000, 40), \textit{wdbc}~(2, 569, 30), \textit{wilt}~(2, 4839, 5), \textit{winequality}~(7, 6497, 11), \textit{yeast}~(10, 1484, 8). \textit{Concrete} and \textit{housing} were converted to binary classification tasks based on the target mean.


Among the five text document data sets, \textit{lifesci5}~(5, 9000, 52757) was extracted from a collection of abstracts from 5 life sciences journals: \textit{Scientific Reports}, \textit{Oncotarget}, \textit{Proceedings of the National Academy of Sciences of the United States of America}, \textit{ACS Applied Materials and Interfaces} and \textit{PloS One}, obtained from the Europe PMC life science database. As for \textit{reuters}~(4, 2065, 8943), documents were collected by selecting these 4 topics exclusively: ``interest'', ``trade'', ``grain'' and ``crude''. Each of these data sets was pre-processed by stop-word removal and then Porter stemming. The remaining pre-processed data sets were downsampled \textit{Cade12}~(12, 4800, 57312), \textit{webkb}~(4, 2803, 7288) downloaded from \cite{Cachopo2007}, \textit{20newsgroups}~(20, 4000, 130107) from scikit-learn python library, \textit{moviereview}~(2, 2000, 39659) from \cite{Pang2004}, \textit{CNAE9}~(9, 1080, 856), \textit{TTC3600}~(6, 3600, 5692) from the UCI Machine Learning Repository, \textit{sdm06}~(27, 930, 99899) from \cite{Dalkilic2006} and \textit{bbc}~(5, 2225, 9635) from \cite{greene06icml}. All text document data sets used tf-idf as features.






\section{Protein function data}
\label{protein_data}
Protein function data were downloaded from the Swiss-Prot database (July 2015). 
In particular, we collected protein functions for the following model organisms where sufficient annotations are available: \textit{Homo sapiens}, \textit{Mus musculus}, \textit{Arabidopsis thaliana}, \textit{Saccharomyces cerevisiae}, and \textit{Escherichia coli}. Only those annotations with (experimental) evidence codes EXP, IDA, IMP, IPI, IGI, IEP, TAS, and IC were considered. Table~1 summarizes the data sets: here the genome size corresponds to the total number of proteins available for each species in Swiss-Prot. The following three columns show the number of proteins that are annotated in the three domains of Gene Ontology accordingly.

%

\begin{table}[ht]
\centering
\small
\begin{tabular}{|c|r|r|r|r|}
\hline 
Organism & Genome size & MFO & BPO & CCO\tabularnewline
\hline 
\hline 
\textit{H. sapiens} & 20,193 & 11,979 & 11,398 & 12,691\tabularnewline
\hline 
\textit{M. musculus} & 16,733 &  6,728 &  7,702 &  7,322\tabularnewline
\hline 
\textit{A. thaliana} & 14,305 &  4,266 &  5,749 &  5,950 \tabularnewline
\hline 
\textit{S. cerevisiae} & 6,720 & 4,051 &  4,676 &  4,102\tabularnewline
\hline 
\textit{E. coli} & 4,433 & 2,272 &  2,331 &  2,119\tabularnewline
\hline 
\end{tabular}
\label{ontology_data}
\caption{Data set sizes for the five organisms used in this work. The genome size refers to the number of protein sequences available in Swiss-Prot for each species.}
\end{table}

The conditional probability tables were estimated using the maximum likelihood approach from the entire set of functionally annotated proteins in Swiss-Prot. This set included $72977$ proteins from $1576$ species with MFO terms, $92874$ proteins from $1503$ species with BPO terms, and $89693$ proteins from  $862$ species with CCO terms.

\section{Calculation of protein sequence similarity}
\label{protein_seq}
The sequence similarity of two protein sequences was measured as the ratio of the number of identical characters in the alignment and the length of the longer protein sequence. Sequence alignment was obtained using a Needleman-Wunsch algorithm with BLOSUM62 similarity matrix, gap opening penalty of 11 and gap extension penalty of 1.

\section{Phylogenetic clustering}
\label{phylogeny}
The functional phylogenetic trees with respect to a group of organisms are generated using single-linkage hierarchical clustering \cite{Tan2006}. This algorithm starts by considering every data point (species) to be a cluster of one element and in each step merges the two closest clusters. The algorithm continues until all original data points belong to the same cluster. The distance between species is based on pairwise distances between functionally annotated proteins as described below. For simplicity, we use normalized semantic distance from Eq.~\ref{eq:4} with $p=1$ in all experiments.





Without loss of generality, we illustrate the species distance calculation by showing how to compute the distances between \emph{A.~thaliana} (A) and all other organisms using protein function data only. An important challenge in this task arises from unequal genome sizes as well as unequal fractions of experimentally annotated proteins in each species (Table~1), making most distance calculation techniques unsuitable for this task. We therefore carry out sampling to compare species using a fixed yet sufficiently large set of $N$ proteins from each species. The algorithm first samples (with replacement) $N=1000$ proteins from each species. It then counts the number of times the proteins from \textit{E.~coli} (E), \textit{H.~sapiens} (H), \textit{M.~musculus} (M) and \textit{S.~cerevisiae} (Y) are functionally most similar to proteins in \emph{A.~thaliana}, with ties resolved uniformly randomly. These counts are used to calculate the directional distances between \emph{A.~thaliana} and the remaining four species. The procedure is repeated $B=1000$ times with different bootstrap samples to stabilize the results. The details of the algorithm are shown in Algorithm 1.



    
    

\section{Additional results}
We carried out several additional experiments in order to evaluate the proposed distance functions. These experiments investigated the influence of data normalization (z-score, min-max, unit) and performance assessment criteria. These results are provided in a table at

\href{www.cs.indiana.edu/~predrag/table.xlsx}{\texttt{www.cs.indiana.edu/{\raise.17ex\hbox{$\bm{\scriptstyle\sim}$}}predrag/table.xlsx}}.

\noindent Although the effect of data normalization deserves attention, all conclusions reached in the main portion of the paper remain unchanged.

%
%
%
%
%
%
%

\begin{figure}[ht!]
\label{alg:phylogeny}
\begin{center}
\includegraphics[width=0.75\linewidth,trim=4cm 13.5cm 4cm 3.5cm,clip
]{alg_phy.pdf}
\end{center}
\end{figure}

%
%
%




 
 





 
 
 
 